\newtheorem{theorem}{Theorem}
\newtheorem{remark}{Remark}
\DeclarePairedDelimiter{\diagfences}{(}{)}
\newcommand{\diag}{\operatorname{diag}\diagfences}
\begin{document}

\title{Graph Optimization Approach to Range-based  Localization}

\author{Xu Fang, Chen Wang, Thien-Minh Nguyen, Lihua Xie% <-this % stops a space
% \thanks{This work is partially supported by ST Engineering-NTU Corporate Lab
% under the NRF Corporate Lab @ University Scheme and National Natural Science Foundation of China (NSFC) under Grant 61720106011. (Corresponding author: Lihua Xie.)}
\thanks{Xu Fang, Thien-Minh Nguyen, and Lihua Xie are with the School of Electrical and Electronic Engineering, Nanyang Technological University, 639798, Singapore. (e-mail: fa0001xu@e.ntu.edu.sg; e150040@e.ntu.edu.sg; elhxie@ntu.edu.sg).} 
\thanks{Chen Wang is with the Robotics Institute, Carnegie Mellon University, Pittsburgh, PA 15213, USA.  (e-mail: chenwang@dr.com).} 
}

% make the title area
\maketitle

\begin{abstract}
    In this paper, we propose a general graph optimization based framework for
    localization, which can accommodate different types of measurements with varying measurement time intervals. Special emphasis will be on range-based localization.
	Range and trajectory smoothness constraints are constructed in a position graph, then the robot trajectory over a sliding window is estimated by a graph based optimization algorithm.
	Moreover, convergence analysis of the algorithm is provided, and the effects of the number of iterations and window size in the optimization on the localization accuracy are analyzed.
	{Extensive experiments on quadcopter under a variety of scenarios} verify the effectiveness of the proposed algorithm and demonstrate a much higher localization accuracy than the existing range-based localization methods, especially in the altitude direction. 
\end{abstract}

% Note that keywords are not normally used for peerreview papers.
\begin{IEEEkeywords}
Graph optimization approach,  Range-based localization,
2-D and 3-D spaces, Ultra-wide band radio.
\end{IEEEkeywords}

\IEEEpeerreviewmaketitle

\section{Introduction}

\IEEEPARstart{A}{ccurate}, efficient and reliable localization plays important roles in real-time robot-related applications \cite{minaeian2016vision, wu2018passive, wang2017non, wang2018kernel,zhu2017survey,li2019interval} such as formation, swarming and target search.
The vision-based simultaneous localization and mapping (SLAM)  technologies  \cite{JakobEngel:2014uc} have an unacceptable drift over a long run without odometer correction and need significant computational resources for dense mapping, which is not suitable for ultra-low power processors. The WiFi-based localization \cite{Chen:2015bk} has the problem of estimation fluctuations caused by the variation of signals and its low localization accuracy makes it inapplicable to robots such as unmanned aerial vehicles (UAV). Optical motion capture systems can provide millimeter level of localization accuracy \cite{Lupashin:2014bf}, but they are very expensive and confined to limited space.

An alternative method which utilizes the ultra-wideband (UWB) technology \cite{zihajehzadeh2017novel} has attracted researchers' attention due to its robustness to multipath and non-line of sight effects.
The UWB modules with known positions are referred to anchors. Robots carrying UWB modules are able to exchange information and calculate their distances to anchors by measuring the time of flight of signal, which are then used for estimating their own positions. 

However, there are limitations of existing range-based localization algorithms.
First, many algorithms \cite{Cui:2016bg,YiShang:2003kx,Joao:2003wa} such as multilateration and multidimensional scaling (MDS)  algorithms leveraging on optimization require that the mobile robot receive multiple concurrent range measurements. They may have relatively low localization accuracy when the range sensors cannot support multi-channels. For example, the UWB sensors usually use a single wireless channel. 
The neglect of minor time difference between consecutive measurements  brings localization error for the mobile robot.

Second, some algorithms consider such minor time difference, but need an accurate kinematic model. The representative examples are moving horizon estimation (MHE) \cite{alessandri2017fast} and the filter-based methods such as extended Kalman filter (EKF) \cite{Ledergerber:2015ur, mueller2015fusing, Guo:2016ff,  chen2016smartphone}. However, an accurate kinematic model may be hard to obtain due to the complex structure of robots, and a simplified or linearized kinematic model degrades their localization performance.  Last, the recent trend towards machine learning based methods stimulates a new wave of research, but generally it is still difficult to achieve good performance in real-time \cite{VanNguyen:2015du, MohammadAbu:2015tk, wang2019kervolutional}. 
These challenges open space for accurate, reliable, and robust localization techniques.

From the experimental results in the existing range-based localization methods \cite{Guo:2016ff,Anonymous:tWWRLPVI, Cui:2016bg,cui2018received}, we find that their performance in the altitude direction is generally not as good as other directions. Some possible solutions include: $(\romannumeral1)$ Adding altitude sensors such as Laser beam or Lidar to measure the altitude, but it requires the ground to be even;  $(\romannumeral2)$ Placing anchors on the ceiling, but it may also be difficult for many environments.

The graph optimization approach was originated from the vision-based SLAM technology \cite{JakobEngel:2014uc,kummerle2011g}. By using this technique, we shall present a general graph optimization based framework for localization, which can accommodate different kinds of measurements with varying measurement time intervals. Special emphasis will be on range-based localization, which mitigates the requirements of an accurate kinematic model, multi-channel support and high power processors, and
improves the localization performance.  It is worth noting that the existing geometry optimization methods \cite{Lin:2015kc,  DiFranco:2017ib, li2019globally} are suitable for estimating a static sensor network, which may not applicable for localizing a mobile robot.

The proposed range-based localization
jointly imposes the range and trajectory smoothness constraints over a sliding trajectory window that has several characteristics: $(\romannumeral1)$ It removes the dependence on kinematic model and the requirement of receiving concurrent multiple range measurements; $(\romannumeral2)$  It estimates the trajectory over a window instead of single position estimation, and can be implemented real-time in some low power systems; $(\romannumeral3)$ It is robust to outliers due to fusion with an outlier rejection algorithm; $(\romannumeral4)$ From the experimental results, it is observed that the localization accuracy is much improved, especially in the altitude direction.  This work is based on our previous works \cite{wang2017ultra,fang2018model}.
The main contributions of this paper are summarized as: 
\begin{enumerate}
\item A general localization framework based on graph optimization approach is proposed, which can accommodate different kinds of measurements with varying measurement time intervals. Special emphasis will be on range-only based localization and range-orientation based localization
\item Stability analysis of the algorithm is provided, and the effects of the number of iterations and window size in the optimization on the localization accuracy are analyzed.

\item  {The experimental results \url{https://youtu.be/UuMBSrCEs6Q} on quadcopter demonstrate its stability as well as much higher localization accuracy} than existing range-based algorithms, especially in the altitude direction without the need of placing anchors on the ceiling or adding altitude sensors. 
\end{enumerate}

This paper is organized as follows: 
Section \ref{sec:preliminary} proposes a general framework for localization. The basic concepts, problem description, problem formulation and special emphases on range-only based localization and range-orientation based localization are presented, respectively. In Section \ref{a6},  the optimization algorithm and computational complexity analysis are provided. Section \ref{a5} presents the stability analysis. Section \ref{a7}  provides the details of experimental results. Section \ref{a9} ends this paper with conclusions.

\begin{figure}[!t]
\centering
\includegraphics[width=0.6\linewidth]{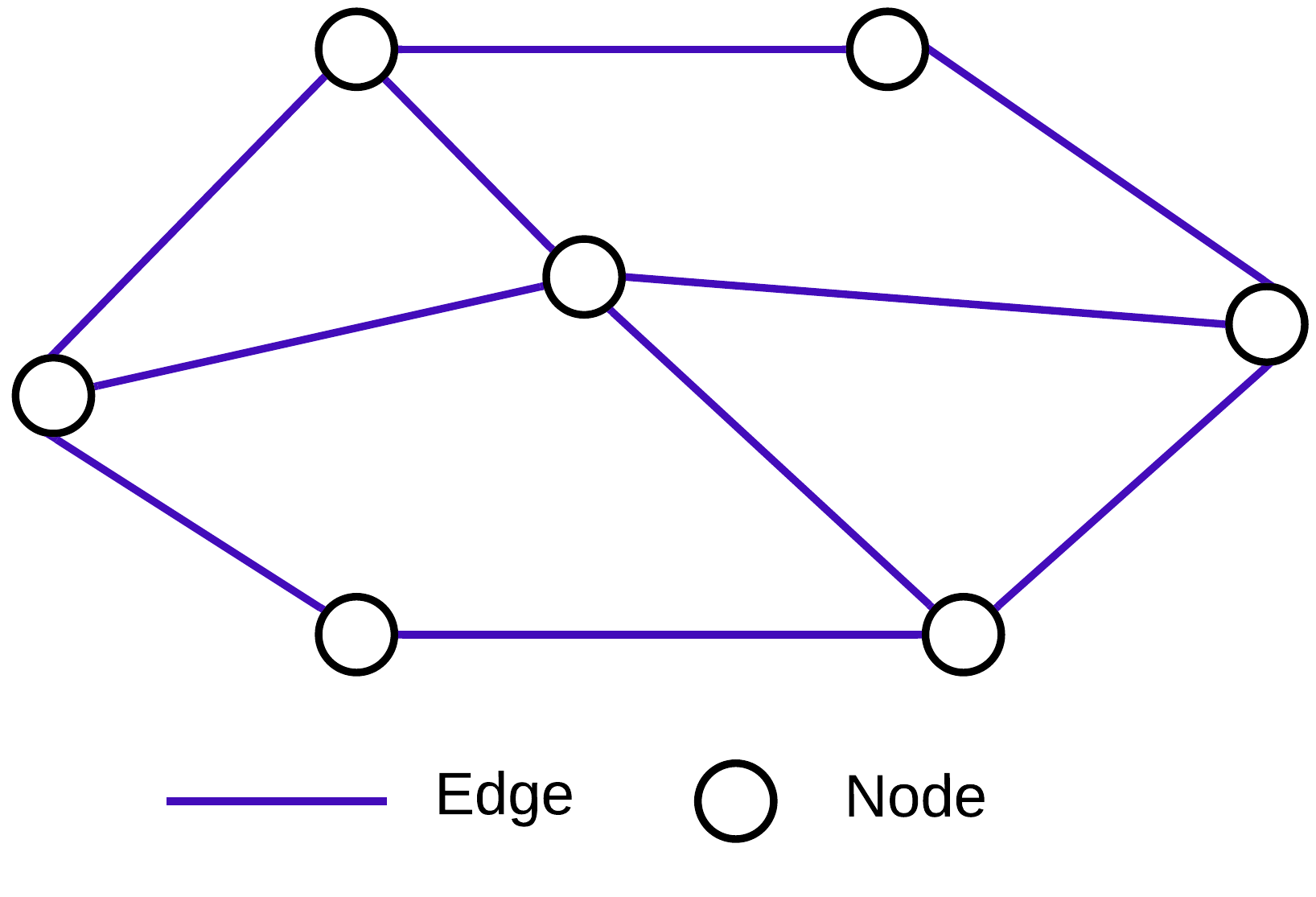}
\caption{A general framework for localization.
} 
\label{figg}
\end{figure}

\section{{A general framework for localization}}\label{sec:preliminary}

\subsection{Basic Concepts and Problem Description}\label{basic}

Considering the robot motion in 3-D space, its pose can be represented by a transformation matrix $\mathbf{P}$: 
\begin{equation}\label{1}
\mathbf{P} = \left[\begin{array}{cc}
    \mathbf{R}  & \mathbf{t} \\
    \mathbf{0}  &  1 
    \end{array}\right],\ \
    \mathbf{R} \in \mathbb{R}^{3 \times 3}, \ \ \mathbf{t} \in \mathbb{R}^{3},
\end{equation} 
where $\mathbf{R}$ is the rotation matrix, and $\mathbf{t}$ is the translation vector. The rotation matrix $\mathbf{R}$ and transformation matrix $\mathbf{P}$ belong to the Lie groups $\text{SO(3)}$ and $\text{SE(3)}$ \cite{Brian:2015vo}, respectively.
The rotation matrix $\mathbf{R}$ and transformation matrix $\mathbf{P}$ can be represented by vectors $\omega$ and
$\epsilon$ via an exponential mapping, i.e,
% mapped into vector space  by exponential map:  
\begin{equation}\label{5}
\begin{array}{ll}
&\epsilon = ( \mathbf{\omega},\mathbf{u})^T \in \mathbb{R}^6, \omega = (\omega_1,\omega_2,\omega_3)^{T} \in \mathbb{R}^3 , \mathbf{u}^T \in \mathbb{R}^3, \\
&\omega_{\times}  \!=\! \left[\begin{array}{ccc}
    0 & -\omega_3 &\omega_2 \\
    \omega_3  &  0 & -\omega_1 \\
    -\omega_2 &\omega_1 &0 
    \end{array}\right], \ \ \mathbf{\epsilon_\times} = \left[\begin{array}{cc}
    \mathbf{\omega_{\times}}  & \mathbf{u}^T \\
    \mathbf{0}  &  0 
    \end{array}\right], \\ 
    & \mathbf{R} = \exp(\omega_{\times}), \ \
    \mathbf{P} = \exp(\epsilon_{\times}).
\end{array} 
\end{equation}

The rotation matrix $\mathbf{R}$ and transformation matrix $\mathbf{P}$ span over non-Euclidean spaces. Based on equation \eqref{5}, the functions 
$\mathbf{log}_\text{SO(3)}(\cdot)$ and $\mathbf{log}_\text{SE(3)}(\cdot)$ are defined such that the 
rotation matrix $\mathbf{R}$ and transformation matrix $\mathbf{P}$ in non-Euclidean spaces are mapped into their corresponding Euclidean spaces.
\begin{equation}\label{21}
\mathbf{log_{\text{SO(3)}}(\mathbf{R}) = \mathbf{\omega}}, \ \ \mathbf {log_{\text{SE(3)}}(\mathbf{P}) = \epsilon}.
\end{equation}

Then, the conventional optimization methods such as Levenberg-Marquardt method applicable to a Euclidean space can be used for rotation estimation and transformation estimation. 

The graph $G=\{ \mathcal{V},\mathcal{E}\}$ consisting of nodes and edges in Fig. \ref{figg}
shows the structure of the proposed localization, where $\mathcal{V}=\{\nu_1, \nu_2, \cdots, \nu_n \}$ is the node set, and $\mathcal{E}  \subseteq \mathcal{V} \times \mathcal{V}$ is the edge set. Denoted by $(\nu_i,\nu_j) \subseteq \mathcal{E}$ an edge of $G$. Each edge represents a constraint between two nodes.
The graph $G$
allows fusion of different measurements by imposing constraints on the nodes. The problem is how to localize the nodes based on the constraints. For example, if each node in the graph $G$ represents a mobile robot, 
some robots are equipped with Global Positioning System (GPS) and know their own positions, while the rest do not know their own positions.
The problem becomes how to localize the rest robots based on the known positions of some robots and the constraints among the robots. 

Each edge $(\nu_i,\nu_j)$ connects two nodes $\nu_i,\nu_j$, and the translations, rotations and transformations of the two nodes $\nu_i,\nu_j$ are denoted by $\mathbf{t}_i, \mathbf{t}_j \in \mathbb{R}^{3}$ , $\mathbf{R}_i, \mathbf{R}_j \in \mathbb{R}^{3 \times 3}$ and $\mathbf{P}_i, \mathbf{P}_j \in \mathbb{P}^{4 \times 4}$, respectively. There are four kinds of constraints.

\subsubsection{Range constraint}
Denoted by $d_{ij} \in \mathbb{R}$ the range measurement between nodes $\nu_i,\nu_j$. The range constrained equation is defined as 
\begin{subequations}\label{et1}
\begin{align}
& {E_{r}(\mathbf{t}_i,\mathbf{t}_j)}  =  w_{r}^{ij} \cdot \rho( {e_{r}^{ij}} ),\\
& e_{r}^{ij}  =  d_{ij} - ||\mathbf {t}_{i}-\mathbf{t}_{j}||_2,
\end{align}
\end{subequations}
where  
$w_{r}^{ij} \in \mathbb{R}$ is the weight. $||\cdot||_2$ is the Euclidean norm of a vector or the spectral norm of a matrix. $\rho(\cdot)$ is the Pseudo-Huber loss function defined as $\rho(\varrho) = \xi^2(\sqrt{1+(\varrho/\xi)^2}-1)$ where $\xi>0$ is the slope parameter.

\subsubsection{Relative translation constraint}

Denoted by $\mathbf{l}_{ij} \in \mathbb{R}^{3}$ the relative translation measurement between nodes $\nu_i,\nu_j$. The relative translation constrained equation is designed as
\begin{subequations}\label{et3}
\begin{align}
& {E_{t}(\mathbf{t}_i,\mathbf{t}_j)}  =   \rho \left( \sqrt{ {\mathbf{e}_{t}^{ij}}^T\mathbf{w}_{t}^{ij}\mathbf{e}_{t}^{ij} } \right) ,\\
& \mathbf{e}_{t}^{ij}  =  
\mathbf{l}_{ij} - (\mathbf {t}_{i}-\mathbf{t}_{j}),
\end{align}
\end{subequations}
where $\mathbf{w}_{t}^{ij} \in \mathbb{R}^{3 \times  3}$ is the weight. 

\subsubsection{Relative rotation constraint}

{Denoted by $\mathbf{u}_{ij} \in \mathbb{R}^{3 \times  3}$ the relative rotation measurement between nodes $\nu_i,\nu_j$. The relative rotation
constrained equation is designed as}
\begin{subequations}\label{et4}
\begin{align}
& {E_{o}(\mathbf{R}_i,\mathbf{R}_j)}  =   \rho\left( \sqrt{ {\mathbf{e}_{o}^{ij}}^T\mathbf{w}_{o}^{ij}\mathbf{e}_{o}^{ij} } \right),\\
& \mathbf{e}_{o}^{ij}  =  \mathbf {log_{\text{SO(3)}}(\mathbf{u}_{ij} \cdot (\mathbf {R}_{i}\mathbf{R}_{j})^{-1})},
\end{align}
\end{subequations}
where $\mathbf{w}_{o}^{ij} \in \mathbb{R}^{3 \times  3}$ is the weight, and function $\mathbf{ log_{\text{SO(3)}}(\cdot)}$ is defined in \eqref{21}. 

\subsubsection{Relative transformation constraint}

{Denoted by $\mathbf{q}_{ij} \in \mathbb{R}^{4 \times  4}$ the relative transformation measurement between nodes $\nu_i,\nu_j$. The relative transformation constrained equation is designed as}
\begin{subequations}\label{et5}
\begin{align}
& {E_{p}(\mathbf{P}_i,\mathbf{P}_j)}  =   \rho\left( \sqrt{ {\mathbf{e}_{p}^{ij}}^T\mathbf{w}_{p}^{ij}\mathbf{e}_{p}^{ij} } \right),\\
& \mathbf{e}_{p}^{ij}  =   \mathbf{ log_{\text{SE(3)}}}(
\mathbf{q}_{ij} \cdot  (\mathbf{P}_{i}\mathbf{P}_{j})^{-1}),
\end{align}
\end{subequations}
where $\mathbf{w}_{p}^{ij} \in \mathbb{R}^{6 \times  6}$ is the weight, and function $\mathbf{ log_{\text{SE(3)}}(\cdot)}$ is defined in \eqref{21}.

\subsection{Problem Formulation}

{For the node set $\mathcal{V}=\{\nu_1, \nu_2, \cdots, \nu_n \}$, the translations $\mathbf{t}=(\mathbf{t}_1^T,\mathbf{t}_2^T,\cdots,\mathbf{t}_n^T)^T$ and  
transformations  $\mathbf{P}=(\mathbf{P}_1^T,\mathbf{P}_2^T,\cdots,\mathbf{P}_n^T)^T$ of the nodes can be estimated respectively by solving the following cost functions \eqref{g1} and \eqref{g2}.}
\begin{subequations}\label{g1}
\begin{align}
& F( \mathbf{t}) = \sum\limits_{(\nu_i,\nu_j) \subseteq \mathcal{E}}(E_{r}^{ij} +E_{t}^{ij}), \\
& \mathbf {\hat t}  = {\arg}\min  F( \mathbf{ t}),
\end{align}
\end{subequations}
where $\mathbf {\hat t}=(\mathbf{\hat t}_1^T,\mathbf{\hat t}_2^T,\cdots,\mathbf{\hat t}_n^T)^T$ are the estimated translations.
\begin{subequations}\label{g2}
\begin{align}
& F( \mathbf{P}) = \sum\limits_{(\nu_i,\nu_j) \subseteq \mathcal{E}}(E_{r}^{ij} +E_{t}^{ij}+E_{o}^{ij}+E_{p}^{ij}), \\
& \mathbf {\hat P}  = {\arg}\min  F( \mathbf{ P}),
\end{align}
\end{subequations}
where $\mathbf {\hat P}=(\mathbf{\hat P}_1^T,\mathbf{\hat P}_2^T,\cdots,\mathbf{\hat P}_n^T)^T$ are estimated transformations.

\begin{remark}
For this general framework, we provide a corresponding application platform  \url{https://github.com/wang-chen/localization}, which
can accommodate different kinds of measurements for localization. The proposed general framework is inspired by the vision-based SLAM technology \cite{JakobEngel:2014uc,kummerle2011g}. {In the SLAM technology, vision measurement is indispensable,
but in our framework, the mobile robot can be localized without vision measurement.}
\end{remark}

\begin{remark}\label{r4}
The weights  ${w}_{r}^{ij}$,  $\mathbf{w}_{t}^{ij}$, $\mathbf{w}_{o}^{ij}$ and $\mathbf{w}_{p}^{ij}$ in \eqref{et1}-\eqref{et5} can be chosen based on their measurement accuracy. {Denote the covariance of a measurement noise by $\sigma^2$. The most straightforward way is to set $\mathbf{w}$ = $\frac{1}{\sigma^2+1}$. Note that the $1$ in the denominator helps prevent singularity when $\sigma^2$ is very small.}
\end{remark}

Based on the proposed general framework, in this paper, {special emphasis will be on range-only based localization and range-orientation based localization.} The structure of the range-based localization
consisting of several fixed anchors and a mobile robot is shown in Fig. \ref{fig2}. For this special structure,
the constraints can be divided into two categories: 
\begin{enumerate}
    \item Range constraint: the constraint between the robot and an anchor;
    \item Trajectory smoothness constraint: the constraint between adjacent robot translations.
\end{enumerate}

\begin{figure*}[!t]
\centering
\includegraphics[width=0.9\linewidth]{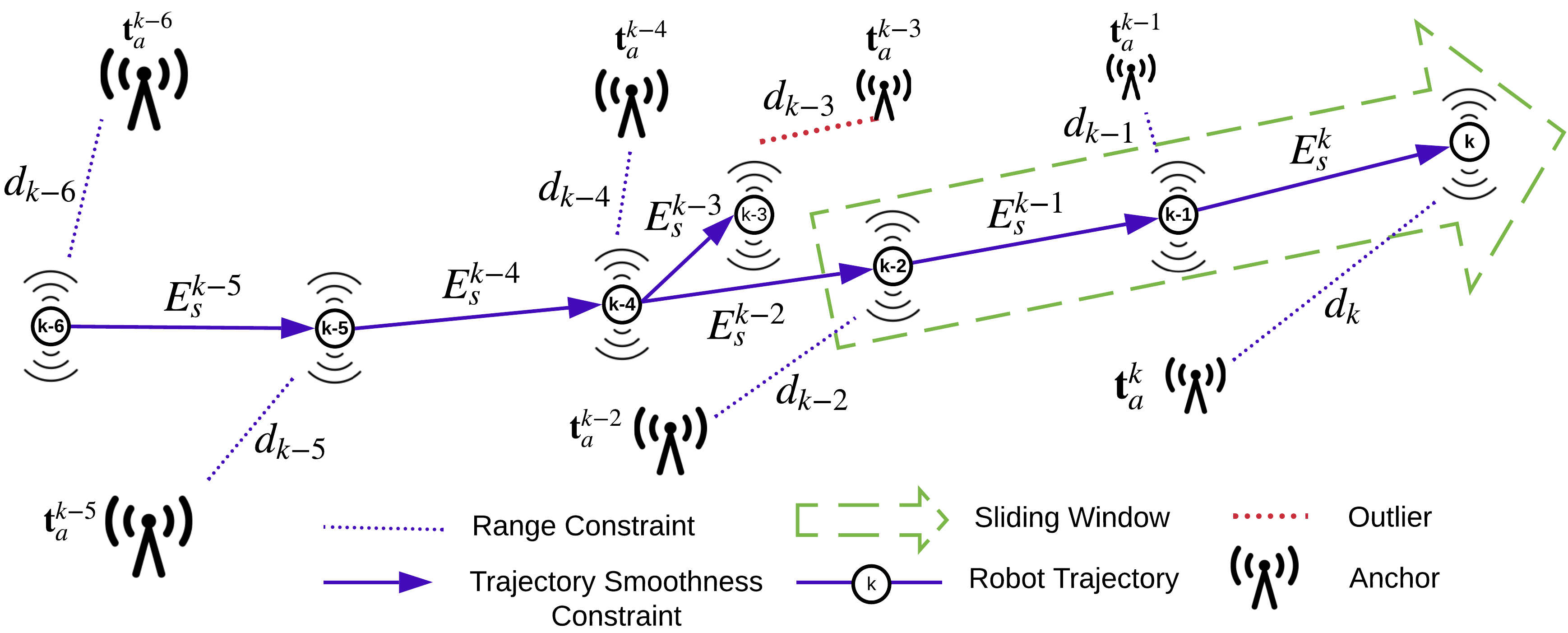}
\caption{The proposed range-based 
localization framework is shown in this figure.  Each edge represents a constrained equation. At each time instant $k$, we construct a range constrained equation $E_{r}^{k}$ and a trajectory smoothness constrained equation $E_{s}^{k}$. If the robot received a measurement outlier, the corresponding range constrained equation such as $E_{r}^{k\!-\!3}$ and trajectory smoothness constrained equation such as $E_{s}^{k\!-\!3}$
will not be added to the cost function.
} 
\label{fig2}
\end{figure*}

\subsection{Range-only based Localization}\label{a44}

In this part, we focus on a range-only based localization. Compared with the EKF-based or MHE-based range localization algorithms \cite{mueller2015fusing,   Benini:2012br,
pillonetto2010unconstrained,kimura2014vehicle,simonetto2011distributed,wang2014optimization,girrbach2017optimization, Guo:2016ff} which use several kinds of measurements for localization shown in Table. \ref{tablem}, the proposed method only needs range measurements.

\begin{table}[t]
  \begin{center}
  \caption{ {Comparison with existing EKF-based or MHE-based range localization methods.}}
  \resizebox{1.\columnwidth}{!}{
        \begin{tabular}{ccc}
        \toprule
         \textbf{Methods} & \textbf{Measurements}  & \textbf{Localization}  \\
        \midrule
        EKF \cite{mueller2015fusing} &
        Range, Acceleration, Angular rate & 2-D,3-D
        \\
        EKF \cite{Benini:2012br} & Range, Orientation, Odometry & 2-D \\
        EKF \cite{Guo:2016ff} & Range, Acceleration & 2-D \\
        MHE \cite{pillonetto2010unconstrained} & Range, Translation Speed, Rotational Speed & 2-D \\
        MHE \cite{kimura2014vehicle}  & Range, Velocity, Odometry & 2-D  \\
        MHE \cite{simonetto2011distributed}  & Range, Translation Speed, Rotational Speed & 2-D  \\
        MHE \cite{wang2014optimization}  & Range, Orientation, Altitude & 3-D  \\
            MHE \cite{girrbach2017optimization}  & Range, Acceleration, Angular rate, GNSS & 3-D  \\
        \textbf{Our method} & Range & 2-D,3-D\\
        \bottomrule
        \end{tabular}}
        \label{tablem}
  \end{center}
    \textbf{GNSS}: Global navigation satellite system \\
    \textbf{Our method}: The proposed range-only based localization
  \end{table}

\subsubsection{Range constrained equation}
The robot translation at time instant $k$ is denoted by $\mathbf{t}_k \in \mathbb{R}^{3}$, so that the robot trajectory can be denoted as $\mathbf{t}=(\mathbf{t}_1^T,\mathbf{t}_2^T,\cdots,\mathbf{t}_k^T)^T$. Our UWB ranging algorithm uses two-way time of flight measurement to calculate the range. A mobile robot equipped with a UWB module at translation $\mathbf{t}_{k}$ is able to range to one of the fixed UWB anchors at translation $\mathbf{t}_a^{k}$ shown in Fig. \ref{fig2}. The corresponding range measurement between $\mathbf{t}_{k}$ and $\mathbf{t}_a^{k}$ is denoted by $d_k$, which is obtained by the multiplication of light speed $c$ and the measurement of time of flight:
\begin{equation}\label{25}
\begin{array}{c} 
d_k=c\frac{Q_{ks}-Q_{kr}}{2}  + \eta_{k}= ||\mathbf {t}_{k}-\mathbf{t}_a^{k}||_2 + \eta_{k}, \\
\end{array}
\end{equation}
where 
$Q_{ks}$ and $Q_{kr}$ are the synchronized time instants when the UWB ranging radio is sent and received relative to the robot's clock respectively. $\eta_k$ is the bounded range measurement noise with $|\eta_k| \le \eta$. By applying the 3-$\sigma$ rule under the assumption that the range measurement noise $\eta_k$ in \eqref{25} follows an approximately normal distribution, we get $\eta_k\sim\mathbb{N}(0,\sigma_r^2)$ with variance $\sigma_r^2=\frac{\eta^2}{9}$.
At time instant $k$, the range constrained equation ${E_{r}^{k}}$ is defined as
\begin{subequations}\label{26}
\begin{align}
{E_{r}^{k}}  &=  w_{r}^{k} \cdot \rho( {e_{r}^{k}} ),\\
e_{r}^{k} & =  d_{k} - ||\mathbf {t}_{k}-\mathbf{t}_a^{k}||_2,
\end{align}
\end{subequations}
where 
$w_{r}^{k}$ is the weight given by
\begin{equation}\label{fa}
w_{r}^{k} = \frac{\iota^{2}}{\sigma_{r}^2 + \iota^{2}},   
\end{equation}
where $\iota$ is a free-parameter. 

\begin{remark}
From Remark \ref{r4}, the weight can be set as ${w}_{r}^k$ = $\frac{1}{\sigma_r^2+1}$. To be more flexible, we add another free parameter $\iota$ to shrink $\sigma_r$. Then we have ${w}_{r}^k = \frac{1}{\sigma_r^2/{\iota^2}+1}= \frac{\iota^2}{\sigma_r^2+\iota^2}$, where the square is to make it positive. 
\end{remark}

\subsubsection{Trajectory smoothness constrained equation}

Equation \eqref{26} only gives constraints on a set of sparse points and fails to form a smooth trajectory. To solve this problem, a trajectory smoothness constrained equation between adjacent robot translations is needed. The moving equation between adjacent translations $\mathbf{t}_k$ and $\mathbf{t}_{k-1}$  is
\begin{equation}\label{m1}
    \mathbf{t}_k = \mathbf{t}_{k-1}+ \mathbf{r}_k,
\end{equation}
where $\mathbf{r}_k$ is the relative translation between translations $\mathbf{t}_k$ and $\mathbf{t}_{k-1}$. One issue is that the relative translation $\mathbf{r}_k$ is unknown,
but we have 
\begin{equation}\label{m2}
    ||\mathbf{r}_k||_2 = ||\mathbf{t}_k-\mathbf{t}_{k-1}||_2 \le v_{\text{max}} \cdot \Delta T_k,
\end{equation}
where ${v_{\text{max}}}$ is the maximum velocity of the robot. $\Delta T_k$ is the time interval between translations $\mathbf{t}_k$ and $\mathbf{t}_{k-1}$. 
Similarly, 
by applying the 3-$\sigma$ rule under the assumption that the relative distance $||\mathbf{r}_k||_2$ follows an approximately normal distribution, we get $||\mathbf{r}_k||_2\sim\mathbb{N}(0,(\sigma_s^k)^2)$ with variance $(\sigma_s^k)^2=\frac{(v_{\text{max}}\cdot \Delta T_k)^2}{9}$.  Then, a trajectory smoothness constrained equation ${E_{s}^{k}}$ between adjacent translations
$\mathbf{t}_{k}$ and $\mathbf{t}_{{k-1}}$ is defined as
\begin{subequations}\label{27}
\begin{align}
{E_{s}^{k}} &=  w_{s}^{k} \cdot \rho( {e_{s}^{k}}),\\
e_{s}^{k} &= ||\mathbf {t}_{k}-\mathbf{t}_{{k-1}}||_2,
\end{align}
\end{subequations}
where $w_s^{k}$ is the weight designed as
\begin{equation}\label{d1}
w_s^{k} = \frac{\iota^{2}}{(\sigma_s^k)^2 + \iota^{2}},
\end{equation}
where $\iota$ is a free-parameter.

\subsubsection{Cost function}
At any time instant $k$ when a range measurement $d_k$ is received, we construct a range constrained equation $E_{r}^{k}$ \eqref{26} and a trajectory smoothness constrained equation $E_{s}^{k}$ \eqref{27}, thus a trajectory $\mathbf{t}=(\mathbf{t}_1^T,\mathbf{t}_2^T,\cdots,\mathbf{t}_k^T)^T$ can be estimated by solving the following cost function:
\begin{subequations}\label{c1}
\begin{align}
F( \mathbf{t}) &= \sum_{i=1}^{k}(E_{r}^{i} + E_{s}^{i}), \\
\mathbf {\hat t}  &= {\arg}\min  F( \mathbf{ t}),
\end{align}
\end{subequations}
where $\mathbf {\hat t}=(\mathbf{\hat t}_1^T,\mathbf{\hat t}_2^T,\cdots,\mathbf{\hat t}_k^T)^T$ is the estimated trajectory with $\mathbf{\hat t}_k$ being the estimated translation of the robot at time instant $k$.

\subsubsection{Sliding trajectory window}\label{window}
Considering that the computation in cost function \eqref{c1} will increase as the number of constrained equations increases, a sliding trajectory window is designed to ensure that the computation in \eqref{c1} can be done real-time for some low power processors. At each time instant, we only estimate the trajectory within the sliding window instead of estimating the whole trajectory shown in Fig. \ref{fig2}. If the window size is set as $N$, only $N$ latest translations within the sliding window $\mathbf{{t}}_N^k=(\mathbf{{t}}_{k-N+1}^T,\mathbf{{t}}_{k-N+2}^T,\cdots,\mathbf{{t}}_{k}^T)^T$  will be estimated. Then, the cost function \eqref{c1} becomes 
\begin{subequations}\label{sliding}
\begin{align}
F( \mathbf{{t}}_N^k) &= \sum_{i=k-N+1}^{k}(E_{r}^{i} + E_{s}^{i}), \\
\mathbf{\hat t}_N^k  &= {\arg}\min  F( {\mathbf{ t}_N^k}),
\end{align}
\end{subequations}
where $\mathbf{\hat t}_N^k=(\mathbf{{\hat t}}_{k-N+1}^T,\mathbf{{\hat t}}_{k-N+2}^T,\cdots,\mathbf{{\hat t}}_k^T)^T$ is the estimated trajectory in the sliding window. At time instant $k+1$, when the  translation $\mathbf{{t}}_{k+1}$ is added to the sliding window, the translation $\mathbf{t}_{k-N+1}$ will be removed.  Thus, the trajectory in the sliding window is updated as $\mathbf{{t}}_N^{k+1}=(\mathbf{{t}}_{k-N+2}^T,\mathbf{{t}}_{k-N+3}^T,\cdots,\mathbf{{t}}_{k+1}^T)^T$. Therefore, the number of constrained equations in \eqref{sliding} will remain as $2N$.  

\subsubsection{Outlier rejection algorithm}

UWB-based localization may be trapped in non-line of sight (NLOS) scenarios that induce measurment outliers.  
An outlier rejection algorithm is designed to reject the outliers. 
The proposed outlier rejection algorithm \eqref{outlier} requires that the mobile robot start without measurement outliers, which is easy to be satisfied. At time instant $k$, assume that the trajectory estimate is $\mathbf{\hat t}_N^k=(\mathbf{{\hat t}}_{k-N+1}^T,\mathbf{{\hat t}}_{k-N+2}^T,\cdots,\mathbf{{\hat t}}_k^T)^T$. Then at time instant $k\!+\!1$, the robot receives a range measurement ${d_{k+1}}$ from one of the fixed anchors $\mathbf{t}_{a}^{k+1}$. This range measurement ${d_{k+1}}$ is considered as an outlier if the following condition is satisfied: 
\begin{equation}\label{outlier}
|\|\mathbf { {\hat t}}_{k}-\mathbf{t}_{a}^{k+1}\|_2-d_{k+1}|> \frac{\gamma \cdot v_{\text{max}}}{f},
\end{equation}
where 
$\gamma$ is the outlier rejection parameter, and $f$ is the frequency of UWB sensor. If the inequality \eqref{outlier} is satisfied, the corresponding range constrained equation  $E_{r}^{k+1}$ and trajectory smoothness constrained equation $E_{s}^{k+1}$
will not be added to the cost function \eqref{sliding}, thus the translation $\mathbf{t}_{k+1}$ will not be estimated. On the contrary, 
if the inequality \eqref{outlier} is not satisfied, the measurement $d_{k+1}$ will be used to localize the robot, and then the estimate $\mathbf { {\hat t}}_{k+1}$ can be obtained. The implementation of the range-only based localization algorithm is given in \textbf{Algorithm} \ref{array-sum}.

\begin{algorithm}
\caption{Range-only based Localization}
\label{array-sum}
\begin{algorithmic}[1]
\State \textbf{Initialization:} \State \ \ \ \ Received $k\! \ge \! N$ range measurements at time instant $k$;
\State \ \ \ \  \hspace*{-0.2cm} $N$ latest translations in the sliding  window $\mathbf { {t}}_N^k$;
\State \ \ \ \ \hspace*{-0.2cm} $\mathbf{{t}}_N^k=(\mathbf{{t}}_{k-N+1}^T,\mathbf{{t}}_{k-N+2}^T,\cdots,\mathbf{{t}}_{k}^T)^T$;
\State \ \ \ \ \hspace*{-0.2cm} Obtaining estimated trajectory $\mathbf {{\hat t}}_N^k$ by solving  \eqref{sliding};
\State \ \ \ \ \hspace*{-0.2cm} Obtaining estimated translation $  \hat{\mathbf t}_{k}$;
\State \ \ \ \ \hspace*{-0.2cm} $k_c=1$;
\State \textbf{While} Received range measurement $d_{k+1}$ at time instant $k\!+\!1$;
\State \ \ \ \ \textbf{If} Condition \eqref{outlier} \textbf{do}
\State \ \ \ \ \ \ \ \  Rejecting range measurement $d_{k+1}$;
\State \ \ \ \ \ \ \ \ $k=k+1$; 
\State \ \ \ \ \ \ \ \ $k_c=k_c+1$;
\State \ \ \ \ \ \ \ \ \textbf{If} $k_c>\gamma$ \textbf{do}
\State \ \ \ \ \ \ \ \ \ \ \ \ Robot is trapped in extremely bad environment;
\State \ \ \ \ \ \ \ \ \ \ \ \ \hspace*{-0.2cm} Leaving the extremely bad environment;
\State \ \ \ \ \ \ \ \ \ \ \ \ \hspace*{-0.2cm} Restarting the localization;
\State \ \ \ \ \ \ \ \ \textbf{End}
\State \ \ \ \ \textbf{Else} 
\State \ \ \ \ \ \ \ \  
$\mathbf{{t}}_N^{k+1}\!=\!(\mathbf{{t}}_{k\!-\!N\!+\!2}^T,\mathbf{{t}}_{k\!-\!N\!+\!3}^T,\cdots,\mathbf{{t}}_{k\!+\!1}^T)^T$ is updated; 
\State  \ \ \ \ \ \ \ \ Adding  $E_{r}^{k+1}$ and  $E_{s}^{k+1}$ to the cost function \eqref{sliding};
\State \ \ \ \ \ \ \ \ Removing $E_{r}^{k-N+1}$ and  $E_{s}^{k-N+1}$ from  \eqref{sliding};
\State \ \ \ \ \ \ \ \  Obtaining the estimate $\mathbf { {\hat t}}_N^{k+1}$ by solving \eqref{sliding};
\State \ \ \ \ \ \ \ \ Obtaining estimated translation $\mathbf { {\hat t}}_{k+1}$;
\State \ \ \ \ \ \ \ \ $k=k+1$;
\State \ \ \ \ \ \ \ \ $k_c=1$;
\State \ \ \ \  \textbf{End}
\State   \textbf{End}
\end{algorithmic}
\end{algorithm}

\subsection{Range-orientation based localization}\label{fusion}

The proposed range-only based localization algorithm in section \ref{a44} is used to estimate the robot translation, {which can accommodate other measurements to estimate the robot pose.} In this section, an example for fusing orientation to estimate the robot pose is presented. In the experiments, the inertial measurement unit (IMU) \cite{Anonymous:uDLQ94uT} is used to verify this method. 

\subsubsection{Range constrained equation}
The robot pose at time instant $k$ is denoted as $\mathbf{P}_k \in \mathbb{R}^{4\times4}$, so that the robot trajectory can be denoted by $\mathbf{P}=(\mathbf{P}_1^T,\mathbf{P}_2^T,\cdots,\mathbf{P}_k^T)^T$. 
The range constrained equation ${E_{r}^{k}}$ in \eqref{26} can be rewritten in a form of pose:
\begin{subequations}\label{26a}
\begin{align}
{E_{r}^{k}} &= w_{r}^{k} \cdot \rho( e_{r}^{k} ),\\
 e_{r}^{k}  &=  d_{k} - ||(\mathbf {P}_{k}-\mathbf{P}_a^{k})\cdot (0,0,0,1)^T||_2,
\end{align}
\end{subequations}
where $\mathbf{P}_a^k$ is the anchor pose.

\subsubsection{Trajectory smoothness constrained equation}
The trajectory smoothness constrained equation $E_{p}^{k}$ between adjacent poses   $\mathbf{P}_k$ and $\mathbf{P}_{k-1}$ is designed as
\begin{subequations}\label{31}
\begin{align}
{E_{p}^{k}} &= \rho \left( \sqrt{  {\mathbf{e}_{p}^{k}}^{T} \mathbf {w}_p^{k}\mathbf{e}_{p}^{k} } \right),\\
\mathbf{e}_{p}^{k} &=  \mathbf{ log_{\text{SE(3)}}}(\left[\begin{array}{cc}
    \mathbf{\breve R}_{k-1}^{-1} \mathbf{\breve R}_{k}  & 0\\
    \mathbf{0}  &  1 
    \end{array}\right] \cdot \mathbf{P}_{k}^{-1}\cdot \mathbf{P}_{k-1}),
\end{align}
\end{subequations}
where function $\mathbf{ log_{\text{SE(3)}}(\cdot)}$ is defined in \eqref{21}. The $\mathbf{\breve R}_k$ and $\mathbf{\breve R}_{k-1}$ are measurements from orientation sensor. $\mathbf w_p^{k} \in \mathbb{R}^{6 \times 6}$ is the weight designed as
\begin{equation}\label{d2}
\mathbf {w}_p^{k} = \diag{\mathbf{\mathbf {w}}^{k}_{o},\mathbf{\mathbf {w}}_{t}^{k} },
\end{equation}
where $\mathbf{\mathbf {w}}^{k}_{o} \in \mathbb{R}^{3 \times 3}$ is the weight of the rotation estimate, which is provided by orientation sensor.
$\mathbf{\mathbf {w}}_{t}^{k} \in \mathbb{R}^{3 \times 3}$ is the weight of translation estimate designed as
\begin{equation}\label{diagnal}
\mathbf{w}_{t}^{k} =
\diag{w_s^k,w_s^k,w_s^k},    
\end{equation}
where $w_s^k $ is designed in \eqref{d1}. Since $\mathbf{w}^{k}_{o}$ and $\mathbf{w}^{k}_{t}$ are independent in \eqref{d2}, we can know that fusing the orientation information to \eqref{31} will not influence the estimation of translation. 
By combining \eqref{26a} and \eqref{31}, the trajectory in the sliding window $\mathbf{{P}}_N^k=(\mathbf{P}_{k-N+1}^T,\mathbf{P}_{k-N+2}^T,\cdots,\mathbf{P}_k^T)^T$ can be estimated by solving the following cost function:
\begin{subequations}\label{c2}
\begin{align}
 F(\mathbf{P}_N^k) &= \sum_{i=k-N+1}^k(E_{r}^{i} + E_{p}^{i}), \\
 \mathbf {\hat P}_N^k &= {\arg}\min  F(\mathbf {{P}}_N^k),
\end{align}
\end{subequations}
where $\mathbf {\hat P}_N^k=(\mathbf{\hat P}_{k-N+1}^T,\mathbf{\hat P}_{k-N+2}^T,\cdots,\mathbf{\hat P}_k^T)^T$ is the estimated trajectory with $\mathbf{{\hat P}}_k$ being the estimated pose of the robot at time instant $k$.

\section{Optimization }\label{a6}

\subsection{Optimization in Euclidean Space}\label{eucli}

For the range-only based localization, its cost function \eqref{sliding} with respect to $\mathbf{{t}}_N^k=(\mathbf{{t}}_{k-N+1}^T,\mathbf{{t}}_{k-N+2}^T,\cdots,\mathbf{{t}}_{k}^T)^T$ is optimized in Euclidean space. 
\begin{equation}\label{p1}
     F( \mathbf{t}_N^k) =  \sum_{i=k-N+1}^{k}\phi_r^i+\phi_s^i,
\end{equation}
where
\begin{equation}
\begin{array}{ll}
 & \phi_r^i = \! w_{r}^{i} \cdot \rho \left(d_{i} \! - \! ||\mathbf {t}_{i} \!- \! \mathbf{t}_a^{i} \! ||_2 \right), \\
 &   \phi_s^i= w_{s}^{i} \cdot \rho(   ||{\mathbf t}_{i}-{\mathbf t}_{i-1}||_2), \\
 & \rho(\cdot) = \xi^2(\sqrt{1+(\cdot/\xi)^2}-1).
\end{array}
\end{equation}

The Levenberg-Marquardt \cite{Luo:2007cz} method is used to optimize this cost function. The initial guess of $\mathbf{t}_N^k$ is denoted by $\mathbf{\bar t}_N^k$, and $\Delta \mathbf {t}_N^k =  \mathbf {t}_N^k -  \mathbf {\bar t}_N^k $ is an increment. Then, we can obtain 
\begin{equation}\label{43}
 F( \mathbf{t}_N^k) \! \simeq \!  F( \mathbf{\bar t}_N^k)\! +\! {\Delta \mathbf {t}_N^k}^T\bigtriangledown \! F( \mathbf{\bar t}_{N}^k)\! + \!\frac{1}{2}{\Delta \mathbf { t}_N^k}^{T}\mathbf{B}_{N}( \mathbf{\bar t}_{N}^k)\Delta \mathbf {t}_N^k,  
\end{equation}
where $\mathbf{B}_{N}( \mathbf{\bar t}_{N}^k)$ is a symmetric matrix that approximates the Hessian matrix $\bigtriangledown^2 \! F( \mathbf{\bar t}_{N}^k)$. 
Taking the derivative with respect to the increment $ \Delta \mathbf {t}_N^k$, we have
\begin{equation}\label{45}
\mathbf { B}_N( \mathbf{\bar t}_{N}^k) \Delta \mathbf t_N^k = -\bigtriangledown \! F( \mathbf{\bar t}_{N}^k) .
\end{equation}

Considering that the increment $\Delta \mathbf {t}_N^k$ cannot be acquired uniquely if the matrix $\mathbf B_N( \mathbf{\bar t}_{N}^k)$ is singular,
the Levenberg-Marquardt method introduces a damping factor $\lambda_k>0$ to solve this problem
\begin{equation}\label{46}
 (\mathbf {B}_N( \mathbf{\bar t}_{N}^k) + \lambda_k \mathbf {I})\Delta \mathbf t_N^k= -\bigtriangledown \! F( \mathbf{\bar t}_{N}^k),
\end{equation}
where $\mathbf{I}$ is identity matrix.
The estimate $\mathbf { \mathbf{\hat{t}}}_N^k$ is obtained by
\begin{equation}\label{47}
\begin{array}{ll}
  \mathbf{{\hat t}}_N^k = \mathbf{\bar  t}_N^k + \Delta \mathbf t_N^k, \ \
\mathbf {\bar  t}_N^k=  \mathbf{\hat t}_N^k.
\end{array}
\end{equation}

The optimization process \eqref{43}-\eqref{47} continues until the number of iterations is reached or {the value of cost function is smaller than a threshold.}

\subsection{Optimization in Non-Euclidean Space}\label{noneucli}

For fusing the orientation to estimate the robot pose, the corresponding cost function \eqref{c2} is optimized in non-Euclidean space. To solve  \eqref{c2}, an idea is to map the cost function \eqref{c2} into Euclidean space by a mapping function \eqref{21}. The pose $\mathbf{P}_k$ can be represented by its corresponding vector $\epsilon_k \in \mathbb{R}^6$, so that the cost function \eqref{c2} can be rewritten as:
\begin{subequations}\label{abc}
\begin{align}
& F(\epsilon_N^k) = \sum_{i=k-N+1}^k(E_{r}^{i} + E_{p}^{i}), \\
& \mathbf {\hat \epsilon}_N^k = {\arg}\min  F(\mathbf \epsilon_N^k),
\end{align}
\end{subequations}
where $\epsilon_N^k=(\mathbf{\epsilon}_{k-N+1}^T,\epsilon_{k-N+2}^T,\cdots,\epsilon_k^T)^T$. Similar to the process \eqref{43}-\eqref{47}, the estimate $\mathbf{\hat \epsilon}_N^k=(\hat \epsilon_{k-N+1}^T,\hat \epsilon_{k-N+2}^T,\cdots,\hat\epsilon_k^T)^T$ can be obtained. Then, the estimated trajectory $\mathbf {\hat P}_N^k=(\mathbf {\hat P}_{k-N+1}^T,\mathbf {\hat P}_{k-N+2}^T,\cdots,\mathbf {\hat P}_k^T)^T$ can be acquired by \eqref{21}.

\subsection{Computational Complexity}\label{complexity}
Observe that the computational complexity of the proposed range-only based algorithm and range-orientation based algorithm is dominated by \eqref{43}-\eqref{47}. Suppose that  $M$ is the number of iterations used for Levenberg-Marquardt method and $N$ is the size of the sliding window.
The matrix inverse calculation is involved and its complexity is around $O(N^3)$ \cite{williams2004steerable}. Hence, the computational upper bound of optimization in \eqref{43}-\eqref{47} is $O(MN^3)$. Fortunately, the matrix $\mathbf {B}_N \!+\! \lambda \mathbf {I}$ in \eqref{46} is sparse if $N>5$, i.e. only $9(3N-2)$ entries are non-zero. By taking advantage of this characteristic of $\mathbf {B}_N + \lambda \mathbf {I}$, the increment $\Delta \mathbf t_N^k$ in \eqref{47} can be calculated efficiently by the sparse Cholesky factorization algorithm \cite{chen2008algorithm}. 
When the window size is set as $N=10$ and the number of iterations is taken as $M=10$, the corresponding running time of the proposed algorithm in 
different power processors is shown in Table \ref{tablec}. 
Note that if the update rate (reciprocal of running time) of the proposed algorithm is lower than the frequency of the UWB sensor, the proposed algorithm will lose some range measurements, resulting in a larger translation estimation error. The frequency of the UWB sensor in our experiments is $32.46\mathrm{Hz}$. Hence, the maximum allowed running time is $1/32.46 \simeq 0.0308 \mathrm{s}$ .
It can be seen from Table \ref{tablec} that the proposed method can be applied in some
low power processors  such as Intel Atom x7-Z8750 and Quad-core ARM Cortex-A53 for localization because they can run the proposed algorithm in less than $0.0308 \mathrm{s}$.

\begin{table}[t]
  \begin{center}
  \caption{Running time of the proposed algorithm in different power processors with $M=10$ and $N=10$}
        \begin{tabular}{cc}
        \toprule
         \textbf{Processors} & \textbf{Running time (s)}     \\
         
        \midrule
        Intel core i7 processor &
        0.0019
        \\
        Intel Atom x7-Z8750 processor & 0.0057 \\
        Quad-core ARM Cortex-A53 processor & 0.0301 \\
        \bottomrule
        \end{tabular}
        \label{tablec}
  \end{center}
  \end{table}

\section{Stability Analysis}\label{a5}

For the range-only based cost function \eqref{p1}, the performance of the iterative algorithms \eqref{47} will be analyzed. For the range measurements $d_N^k=(d_{k-N+1},d_{k-N+2},\cdots,d_{k})$, {$\mathbf{{t}}_N^k=(\mathbf{{t}}_{k-N+1}^T,\mathbf{{t}}_{k-N+2}^T,\cdots,\mathbf{{t}}_{k}^T)^T$} is the corresponding trajectory in the sliding window. 
The translation
$\mathbf {t}_{i} = \mathbf{A}_i \cdot \mathbf{t}_{N}^k$ can be extracted from $\mathbf{t}_{N}^k$ by a matrix $\mathbf{A}_i \in \mathbb{R}^{3 \times 3N}$.  $\mathbf{\bar t}_N^k=(\mathbf{{\bar t}}_{k-N+1}^T,\mathbf{{\bar t}}_{k-N+2}^T,\cdots,\mathbf{{\bar t}}_k^T)^T$ is the initial guess of $\mathbf{ t}_N^{k}$.  
Then, the gradient and Hessian of the cost function \eqref{p1} with respect to the initial guess $\mathbf{\bar t}_{N}^k$  are 
\begin{equation}\label{gr1}
\begin{array}{ll}
     &\bigtriangledown \! F( \mathbf{\bar t}_{N}^k) \\
     &\!= \! -\sum\limits_{i=k\!-\!N\!+\!1}^{k} \! \mathbf{A}_i^T \! \cdot \!  y_i(\mathbf{ \bar t}_i) \! \cdot \! w_{r}^{i} \! \cdot \! \bigtriangledown \!  \mathbf h_i(\mathbf{\bar t}_i)  +  \\
     &  \sum\limits_{i=k\!-\!N\!+\!2}^{k} (\mathbf{A}_i^T \! - \!\mathbf{A}_{i\!-\!1}^T)  \! \cdot \! {\mathbf z}_i(\mathbf{ \bar t}_i-\mathbf{ \bar t}_{i-1})  \! \cdot \! w_s^i  \!  + \! \chi_{k\!-\!N\!+\!1}.
\end{array}
\end{equation}
where
\begin{equation}\label{gradient}
\begin{array}{ll}
    & m_i(\mathbf{ \bar t}_i) = d_{i} \! - ||\mathbf{ \bar t}_i  -  \mathbf{t}_a^{i}||_2, \\
    &
    y_i(\mathbf{ \bar t}_i) = \frac{m_i(\mathbf{ \bar t}_i)}{\sqrt{1+\frac{m_i^2(\mathbf{ \bar t}_i)}{\xi^2}}}, \\
    & \bigtriangledown \!  \mathbf h_i(\mathbf{ \bar t}_i) = \frac{\mathbf{ \bar t}_i  -  \mathbf{t}_a^{i}  }{ ||\mathbf{ \bar t}_i  -  \mathbf{t}_a^{i}||_2}, \\ 
    &  \mathbf{z}_i(\mathbf{\bar t}_i - \mathbf{ \bar t}_{i-1}) = \frac{\mathbf{\bar t}_i \!-\! \mathbf{\bar t}_{i\!-\!1}}{\sqrt{1+\frac{||\mathbf{\bar t}_i \!-\! \mathbf{\bar t}_{i\!-\!1}||_2^2}{\xi^2}}}, \\
    & \chi_{k\!-\!N\!+\!1} = 
     \mathbf{A}_{k\!-\!N\!+\!1}^T \cdot \! \frac{\mathbf{\bar t}_{k\!-\!N\!+\!1} \!-\! \mathbf{\hat t}_{k\!-\!N}}{\sqrt{1+\frac{||\mathbf{\bar t}_{k\!-\!N\!+\!1} \!-\! \mathbf{\hat t}_{k\!-\!N}||_2^2}{\xi^2}}}  \! \cdot \! w_s^{k\!-\!N\!+\!1}.
\end{array}
\end{equation}

The $\mathbf{\hat t}_{k\!-\!N}$ in \eqref{gradient} is from the estimate $\mathbf{\hat t}_N^{k\!-\!1}$,  and
\begin{equation}
\begin{array}{ll}
    &\bigtriangledown^2 \! F( \mathbf{\bar t}_{N}^k) \\
    &\!  =  \! -\sum\limits_{i=k\!-\!N\!+\!1}^{k} \! \mathbf{A}_i^T \! \cdot \! w_{r}^{i} \! \cdot \! \mathbf{s}_i(\mathbf{\bar t}_{i}) \! \cdot \! \mathbf{A}_i + \\
    &  \sum\limits_{i=k\!-\!N\!+\!2}^{k} (\mathbf{A}_i^T \!- \!\mathbf{A}_{i\!-\!1}^T) \! \cdot \! \mathbf{x}_i(\mathbf{ \bar t}_i- \mathbf{ \bar t}_{i-1}) \! \cdot \! w_s^i \! \cdot \! (\mathbf{A}_i\! -\! \mathbf{A}_{i\!-\!1}) + \\
    &   \mathbf{A}_{k\!-\!N\!+\!1}^T \! \cdot \!  \varpi_{k\!-\!N\!+\!1} \cdot \! w_s^{k\!-\!N\!+\!1} \! \cdot \! \mathbf{A}_{k\!-\!N\!+\!1},
\end{array}
\end{equation}
where
\begin{equation}
\begin{array}{ll}
   & \mathbf{s}_i(\mathbf{\bar t}_{i}) \!=  \bigtriangledown \! \mathbf y_i(\mathbf{\bar t}_{i}) \cdot \bigtriangledown \! \mathbf h_i^T(\mathbf{\bar t}_{i}) \!+ \! y_i(\mathbf{\bar t}_{i}) \cdot \bigtriangledown^2 \! \mathbf h_i(\mathbf{\bar t}_{i}), \\
    & \bigtriangledown \! \mathbf y_i(\mathbf{\bar t}_{i}) = - l_i(\mathbf{\bar t}_{i}) \cdot  \bigtriangledown \!  \mathbf h_i(\mathbf{\bar t}_{i}), \\
    & l_i(\mathbf{\bar t}_{i}) = 
    \frac{1}{(1+\frac{m_i^2(\mathbf{\bar t}_{i})}{\xi^2})^{\frac{3}{2}}}, \\
    & \bigtriangledown^2 \! \mathbf h_i(\mathbf{\bar t}_{i}) = \frac{\mathbf{I}-\bigtriangledown \! \mathbf h_i(\mathbf{\bar t}_{i}) \cdot \bigtriangledown \! \mathbf h_i^T(\mathbf{\bar t}_{i})}{||\mathbf{t}_i-\mathbf{t}_a^i||_2}, \\
    & \mathbf{x}_i(\mathbf{\bar t}_{i}-\mathbf{ \bar t}_{i-1}) =   \frac{(\xi^2+||\mathbf{\bar t}_{i}-\mathbf{\bar t}_{i-1}||_2^2)\mathbf{I} - (\mathbf{\bar t}_{i}-\mathbf{\bar t}_{i-1})(\mathbf{\bar t}_{i}-\mathbf{\bar t}_{i-1})^T}{\xi^2(1+\frac{||\mathbf{\bar t}_{i}-\mathbf{\bar t}_{i-1}||_2^2}{\xi^2})^{\frac{3}{2}}}, \\
    & \varpi_{k\!-\!N\!+\!1} = \frac{(\xi^2+||\mathbf{\bar t}_{k\!-\!N\!+\!1}-\mathbf{\hat t}_{k\!-\!N}||_2^2)\mathbf{I}- (\mathbf{\bar t}_{k\!-\!N\!+\!1}-\mathbf{\hat t}_{k\!-\!N})(\mathbf{\bar t}_{k\!-\!N\!+\!1}-\mathbf{\hat t}_{k\!-\!N})^T}{\xi^2(1+\frac{||\mathbf{\bar t}_{k\!-\!N\!+\!1}-\mathbf{\hat t}_{k\!-\!N}||_2^2}{\xi^2})^{\frac{3}{2}}}.
\end{array}
\end{equation}

From \eqref{25} and \eqref{m2}, we can know that $\mathbf{t}_N^k$ belongs to the set $X_N^k$:
\begin{equation}\label{set}
\begin{array}{ll}
 X_N^k =\{\hspace*{-0.3cm}&\mathbf{t}_N^k : ||\mathbf{t}_i\!-\!\mathbf{t}_a^i||_2\le d_i\!+\!\eta,||\mathbf{t}_{i}\!-\!\mathbf{t}_{i-1}||_2 \le v_{\max}\cdot T_N^k, \\
 &T_N^k=\max (\Delta T_{k-N+1},\Delta T_{k\!-N\!+\!2},\cdots,\Delta T_{k} ), \\ 
 &i=k\!-\!N\!+\!2,k\!-\!N\!+\!3,\cdots,k \}. 
\end{array}
\end{equation}

For the Hessian matrix, let 
\begin{equation}\label{bb1}
    \delta_{s}^{k} = \min_{\ \{ \mathbf{t}_N^{k} \in X_N^k, \theta \in [0,1] \}} ||\bigtriangledown^2 \! F(\theta \mathbf{t}_N^{k}\!+\!(1\!-\!\theta)\mathbf{\bar t}_N^{k})||_2,
\end{equation}
and 
\begin{equation}\label{bb2}
    \delta_{l}^{k} = \max_{\ \{ \mathbf{t}_N^{k} \in X_N^k, \theta \in [0,1] \}} ||\bigtriangledown^2 \! F(\theta \mathbf{t}_N^{k}\!+\!(1\!-\!\theta)\mathbf{\bar t}_N^{k})||_2.
\end{equation}

Also, let 
\begin{equation}\label{bb3}
    \mu_{N}^{k} =  ||\mathbf{B}_N(\mathbf{\bar{t}}_N^k)+ \lambda_k \mathbf{I}||_2,
\end{equation}
where $\mathbf{B}_N(\mathbf{\bar{t}}_N^k)$ is from \eqref{43}.
Inspired by the work \cite{alessandri2017fast}, 
define stability parameter $\alpha_k$ as
\begin{equation}\label{es}
\alpha_k \! = \! \max (|1\!-\!\frac{\delta_s^{k}}{\mu_N^{k}}|,|1\!-\!\frac{\delta_l^{k}}{\mu_N^{k}}|).  
\end{equation}

The stability parameter $\alpha_k$ plays a crucial role in ensuring
the convergence of the proposed method.

\begin{theorem}
Suppose stability parameter \eqref{es} satisfies $\alpha_i <1, (i=N\!+\!1,N\!+\!2,\cdots,k\!+\!1)$. Then, the trajectory estimation error for the range-only based cost function \eqref{p1} given by the Levenberg-Marquardt method 
% \begin{equation}\label{tt1}
%  \alpha_i <1, (i=N\!+\!1,N\!+\!2,\cdots,k\!+\!1).  
% \end{equation}
is bounded if the range measurement noise is bounded.
\end{theorem}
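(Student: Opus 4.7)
The plan is to exhibit a one-step contraction for each Levenberg--Marquardt update and then iterate it, so that the trajectory error satisfies a linear recursion of the form $\|e_{k+1}\| \le \alpha \|e_k\| + C$ with $\alpha<1$ and $C$ bounded by the range-noise bound $\eta$. First I would fix the ``true'' trajectory $\mathbf{t}_N^{k\star}\in X_N^k$ (the noiseless positions that produced the $d_i$ up to the noise $\eta_i$), subtract $\mathbf{t}_N^{k\star}$ from the LM update rule \eqref{46}--\eqref{47}, and obtain
\begin{equation*}
\mathbf{\hat t}_N^k - \mathbf{t}_N^{k\star} = (\mathbf{\bar t}_N^k - \mathbf{t}_N^{k\star}) - (\mathbf{B}_N(\mathbf{\bar t}_N^k)+\lambda_k\mathbf{I})^{-1}\bigtriangledown\! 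F(\mathbf{\bar t}_N^k).
\end{equation*}
Applying the integral form of the mean value theorem, $\bigtriangledown\! F(\mathbf{\bar t}_N^k) = \bigtriangledown\! F(\mathbf{t}_N^{k\star}) + \bigl(\int_0^1 \bigtriangledown^{2}\! F(\theta\mathbf{t}_N^{k\star}+(1-\theta)\mathbf{\bar t}_N^k)\,d\theta\bigr)(\mathbf{\bar t}_N^k-\mathbf{t}_N^{k\star})$, lets me rewrite this as the linear map $[\mathbf{I}-(\mathbf{B}_N+\lambda_k\mathbf{I})^{-1}\int_0^1\!\bigtriangledown^{2}\!F\,d\theta](\mathbf{\bar t}_N^k-\mathbf{t}_N^{k\star})$ plus a residual depending only on $\bigtriangledown\! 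F(\mathbf{t}_N^{k\star})$.

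Next I would bound the two pieces separately. For the linear part, by definitions \eqref{bb1}--\eqref{bb3} every integrand satisfies $\delta_s^k\mathbf{I}\preceq \bigtriangledown^{2}\!F\preceq \delta_l^k\mathbf{I}$ on the segment, so $\|\mathbf{I}-(\mathbf{B}_N+\lambda_k\mathbf{I})^{-1}\int_0^1\!\bigtriangledown^{2}\!F\,d\theta\|_2$ is at most $\alpha_k$ by \eqref{es}; this supplies the contraction. For the residual, I would evaluate the gradient \eqref{gr1} at the true trajectory: there $m_i(\mathbf{t}_i^\star)=\eta_i$ so $|y_i|\le|\eta_i|\le\eta$, while each $\|\mathbf{t}_i^\star-\mathbf{t}_{i-1}^\star\|_2$ is bounded by $v_{\max}\Delta T_i$ by \eqref{m2}; combined with $\|\bigtriangledown \mathbf{h}_i\|_2=1$ and boundedness of the weights, this yields $\|\bigtriangledown F(\mathbf{t}_N^{k\star})\|_2\le C_\eta$ for an explicit constant depending on $\eta,v_{\max},\Delta T_k$ and on the previous-window carry-over $\chi_{k-N+1}$. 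Since $(\mathbf{B}_N+\lambda_k\mathbf{I})^{-1}$ has norm $1/\mu_N^k$ (bounded away from $0$ because of the damping term $\lambda_k\mathbf{I}$), the residual piece is bounded. Putting these together gives the one-step bound
\begin{equation*}
\|\mathbf{\hat t}_N^k-\mathbf{t}_N^{k\star}\|_2 \le \alpha_k\|\mathbf{\bar t}_N^k-\mathbf{t}_N^{k\star}\|_2 + C_\eta/\mu_N^k.
\end{equation*}

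Finally I would iterate this bound in two places. Within a single window, $M$ LM iterations give $\|\mathbf{\hat t}_N^k-\mathbf{t}_N^{k\star}\|_2 \le \alpha_k^{M}\|\mathbf{\bar t}_N^k-\mathbf{t}_N^{k\star}\|_2 + (1-\alpha_k^M)/(1-\alpha_k)\cdot C_\eta/\mu_N^k$, so the per-window error contracts toward a ball of size $O(C_\eta/(1-\alpha_k)\mu_N^k)$. Across the sliding window I would do an induction on $k$: the initial guess $\mathbf{\bar t}_N^{k+1}$ is formed from the previous estimate $\mathbf{\hat t}_N^k$ (propagated to the new time by warm-start), so its distance to the new $\mathbf{t}_N^{k+1,\star}$ is controlled by $\|\mathbf{\hat t}_N^k-\mathbf{t}_N^{k\star}\|_2$ plus one step of motion $v_{\max}\Delta T_{k+1}$. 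Combining with the per-window contraction and the hypothesis $\alpha_i<1$ for all $i$ yields a uniform bound on $\|\mathbf{\hat t}_N^k-\mathbf{t}_N^{k\star}\|_2$ for every $k$, as claimed.

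The hard part is the window-transition coupling: the carry-over term $\chi_{k-N+1}$ in \eqref{gradient} glues consecutive windows through the previous estimate $\mathbf{\hat t}_{k-N}$, and the natural induction bound on $\bigtriangledown F(\mathbf{t}_N^{k\star})$ inherits whatever error was accumulated there. I expect to need a careful choice of induction hypothesis--- uniform boundedness of the sequence $\{\|\mathbf{\hat t}_{k-N}-\mathbf{t}_{k-N}^\star\|_2\}_k$ rather than per-window decay---together with an explicit upper bound on $\mu_N^k$ (using that the Hessian has bounded spectrum on $X_N^k$) to guarantee that $C_\eta/\mu_N^k$ does not grow. Everything else reduces to routine application of the mean-value theorem and the definitions in \eqref{bb1}--\eqref{es}.
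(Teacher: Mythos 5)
Your proposal follows essentially the same route as the paper's proof: subtract the true trajectory from the damped LM update, linearize the gradient via the mean value theorem, bound the contraction factor by $\alpha_k$ from \eqref{bb1}--\eqref{es}, bound the residual gradient at the true trajectory, and unroll the resulting recursion across windows using the warm start $\mathbf{\bar t}_N^{k+1}=\mathbf{\hat t}_N^{k}$ and $\alpha<1$. The only differences are cosmetic: the paper bounds the residual (including the carry-over term $\chi$) by the Pseudo-Huber saturation $\xi$ rather than by the noise bound $\eta$, which dissolves the window-coupling difficulty you flag, and it writes out only the single-iteration case, leaving the $M$-iteration extension you make explicit as a remark.
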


\begin{proof}
The initial guess $\mathbf{\bar t}_N^{k+1}$ of $\mathbf{t}_N^{k+1}$ {is set as $\mathbf{\bar t}_N^{k+1}= \mathbf{\hat t}_N^{k}$. }
The estimate $\mathbf{\hat t}_N^{k+1}$  based on the \eqref{46} and \eqref{47} with single iteration is
\begin{equation} \label{p3}
    \mathbf{\hat t}_N^{k+1}= \mathbf{\bar t}_N^{k+1} - (\mathbf{B}_N(\mathbf{\bar t}_N^{k+1}) + \lambda_{k+1} \mathbf{I})^{-1}  \bigtriangledown \! F( \mathbf{\bar t}_N^{k\!+1}).
\end{equation}

According to the mean value theorem, it follows that there exists $\theta \in [0,1]$ such that
\begin{equation}\label{p2}
\begin{array}{ll}
    & \bigtriangledown \! F( \mathbf{\bar t}_N^{k\!+1}) =   \bigtriangledown \! F( \mathbf{t}_N^{k\!+1}) + \\
    & \bigtriangledown^2 \! F(\theta \mathbf{t}_N^{k\!+1}+(1-\theta)\mathbf{\bar t}_N^{k\!+1})  (\mathbf{\bar t}_N^{k\!+1}-\mathbf{ t}_N^{k\!+1}).
\end{array}
\end{equation}

Then, combining \eqref{gr1} and \eqref{25}, it yields
\begin{equation}\label{p4}
\begin{array}{ll}
    & \bigtriangledown \! F( \mathbf{\bar t}_N^{k\!+1}) = \eta_N^{k+1} + \\
    & \bigtriangledown^2 \! F(\theta \mathbf{t}_N^{k\!+1}+(1-\theta)\mathbf{\bar t}_N^{k\!+1})(\mathbf{\bar t}_N^{k\!+1}-\mathbf{ t}_N^{k\!+1}),
\end{array}  
\end{equation}
where 
\begin{equation}
\begin{array}{ll}
   & \eta_N^{k+1} \\
   & =  -\sum\limits_{i=k-N+2}^{k+1} \! \mathbf{A}_i^T \cdot y_i(\mathbf{ t}_{i}) \cdot w_{r}^{i} \cdot \bigtriangledown \!  \mathbf h_i(\mathbf{ t}_{i}) + \\
   &  \sum\limits_{i=k-N+3}^{k+1}(\mathbf{A}_i^T \! - \!\mathbf{A}_{i\!-\!1}^T)  \! \cdot \! \mathbf{z}_i(\mathbf{  t}_i \! -\!  \mathbf{ t}_{i\!-\!1})  \! \cdot \! w_s^i   + \chi_{k\!-\!N\!+\!2}. 
\end{array}
\end{equation}

From \eqref{fa} and \eqref{gradient}, we have
\begin{equation}\label{bound}
\begin{array}{ll}
    &y_i(\cdot)<\xi, ||\mathbf{z}_i(\cdot)||_2<\xi, ||w_{r}^{i}||_2<1, ||w_{s}^{i}||_2<1,\\
    &||\bigtriangledown \! \mathbf{h}_i(\cdot)||_2=1, 
    ||\mathbf{A}_i||_2 \le 1, ||\mathbf{A}_i \! - \!\mathbf{A}_{i\!-\!1}||_2 \le 2, \\
    & ||\chi_{k-N+2}||_2 \le  \xi.
\end{array}
\end{equation}

Then, 
it yields 
\begin{equation}
\begin{array}{ll}
&||\eta_N^{k\!+\!1}||_2 \! <   N \xi + 2 (N-1) \xi + \xi \le (3 N -1
)\xi.
\end{array}
\end{equation}

Let trajectory estimation error be $\mathbf{e}_{k+1}=\mathbf{\hat t}_N^{k+1}-\mathbf{ t}_N^{k+1}$.  $\mathbf{r}_N^{k+1}\!=\!(\mathbf{r}_{k-N+2}^T,\mathbf{r}_{k\!-N+3}^T,\cdots,\mathbf{r}_{k+1}^T)^T$
is the unknown relative translations with $||\mathbf{r}_N^{k+1}||_2 \le N \! \cdot \! v_{\max} \! \cdot \! T_N^{k+1} $.  We have $\mathbf{\bar t}_N^{k+1}-\mathbf{t}_N^{k+1}=\mathbf{\hat t}_N^{k}-\mathbf{t}_N^{k}-\mathbf{r}_N^{k+1}=\mathbf{e}_{k} - \mathbf{r}_N^{k+1}$.

Combining \eqref{p3} with \eqref{p4} leads to
\begin{equation}
\begin{array}{ll}
    & \mathbf{e}_{k+1}  = \mathbf{\bar t}_N^{k\!+\!1}\!-\!\mathbf{t}_N^{k\!+\!1} \!-\! (\mathbf{B}_N(\mathbf{\bar t}_N^{k\!+\!1})\! +\! \lambda_{k\!+\!1} \mathbf{I})^{-1}  \bigtriangledown \! F( \mathbf{\bar t}_N^{k\!+1}) \\
    & =   - (\mathbf{B}_N(\mathbf{\bar t}_N^{k\!+\!1}) \! + \! \lambda_{k\!+\!1} \mathbf{I})^{-1}\eta_N^{k+1} + ( \mathbf{e}_{k}-\mathbf{r}_N^{k\!+\!1}) \cdot \\
    & \left (1\!-\!(\mathbf{B}_N(\mathbf{\bar t}_N^{k\!+\!1}) \! + \! \lambda_{k\!+\!1} \mathbf{I})^{-1}\bigtriangledown^2 \! F(\theta \mathbf{t}_N^{k\!+1}\!+\!(1\!-\!\theta)\mathbf{\bar t}_N^{k\!+\!1}) \right).
\end{array}  
\end{equation}

Therefore,
\begin{equation}
 ||\mathbf{e}_{k\!+\!1}||_2 \! \le \! \frac{||\eta_N^{k\!+\!1}||_2}{\mu_N^{k\!+\!1}}\!+ \!
   \max (|1\!-\!\frac{\delta_s^{k\!+\!1}}{\mu_N^{k\!+\!1}}|,|1\!-\!\frac{\delta_l^{k\!+\!1}}{\mu_N^{k\!+\!1}}|)(||\mathbf{e}_{k}||_2\!+\! ||\mathbf{r}_N^{k\!+\!1}||_2). 
\end{equation}

Let 
\begin{equation}\label{small}
 \begin{array}{ll}
 &  \alpha_i \! = \! \max (|1\!-\!\frac{\delta_s^{i}}{\mu_N^{i}}|,|1\!-\!\frac{\delta_l^{i}}{\mu_N^{i}}|), \beta_i \! = \! \frac{||\eta
 _N^{i}||_2}{\mu_N^{i}},   c_i \! = \! N \cdot v_{\max} \cdot T_N^i ,\\
 &(i=N\!+\!1,N\!+\!2,\cdots,k\!+\!1), \\
 & \alpha= \max (\alpha_{N\!+\!1},\alpha_{N\!+\!2},\cdots, \alpha_{k\!+\!1}), \\
 & \beta= \max (\beta_{N+1},\beta_{N+2},\cdots, \beta_{k+1}), \\
 & c = \max(c_{N+1},c_{N+2},\cdots,c_{k+1}).
 \end{array}
\end{equation}

Then, we have
\begin{equation}\label{size}
    ||\mathbf{e}_{k\!+\!1}||_2 \le  
    \alpha^{k-N+1}  ||\mathbf{e}_{N}||_2 \! +  \!
      \sum\limits_{i=0}^{k-N} \alpha^{i}\beta \! + \! \sum\limits_{i=0}^{k-N} \alpha^{i+1}c.   
\end{equation}

If condition $\alpha_i <1, (i=N\!+\!1,N\!+\!2,\cdots,k\!+\!1)$ holds and {the range measurement noise is bounded}, we get
\begin{equation}
\lim\limits_{ k \rightarrow \infty}  ||\mathbf{e}_{k+1}||_2  \le   \frac{\beta}{1-\alpha} + \frac{\alpha c}{1-\alpha}. 
\end{equation}

The above single iteration result can  be extended to multi-iteration scheme.  {For multi-iteration scheme, it is easy to prove that the trajectory estimation error is also bounded if the range measurement noise is bounded
and $\alpha_i <1, (i=N\!+\!1,N\!+\!2,\cdots,k\!+\!1)$.}
\end{proof}

\begin{remark}
The proposed method does not require accurate initial guess.
{The parameter $\lambda_k$ in \eqref{bb3} can be adjusted 
to ensure $\alpha_k < 1$ \eqref{es}.}
Although the convergence of the cost function \eqref{p1} can be guaranteed by choosing suitable parameter $\lambda_k$,  we can not guarantee that there exists a unique solution to the cost function \eqref{p1} because the cost function \eqref{p1} is a non-convex function. From our experiment experience, the local minimum is more likely to be avoided by setting a larger window size $N$. 
\end{remark}

Since $\mathbf{w}^{k}_{o}$ and $\mathbf{w}^{k}_{t}$ are independent in \eqref{d2}, we can know that fusing the orientation information to the proposed framework will not influence the convergence of the translation estimation.

\begin{figure*}[t]
	\centering
	\includegraphics[width=1\linewidth]{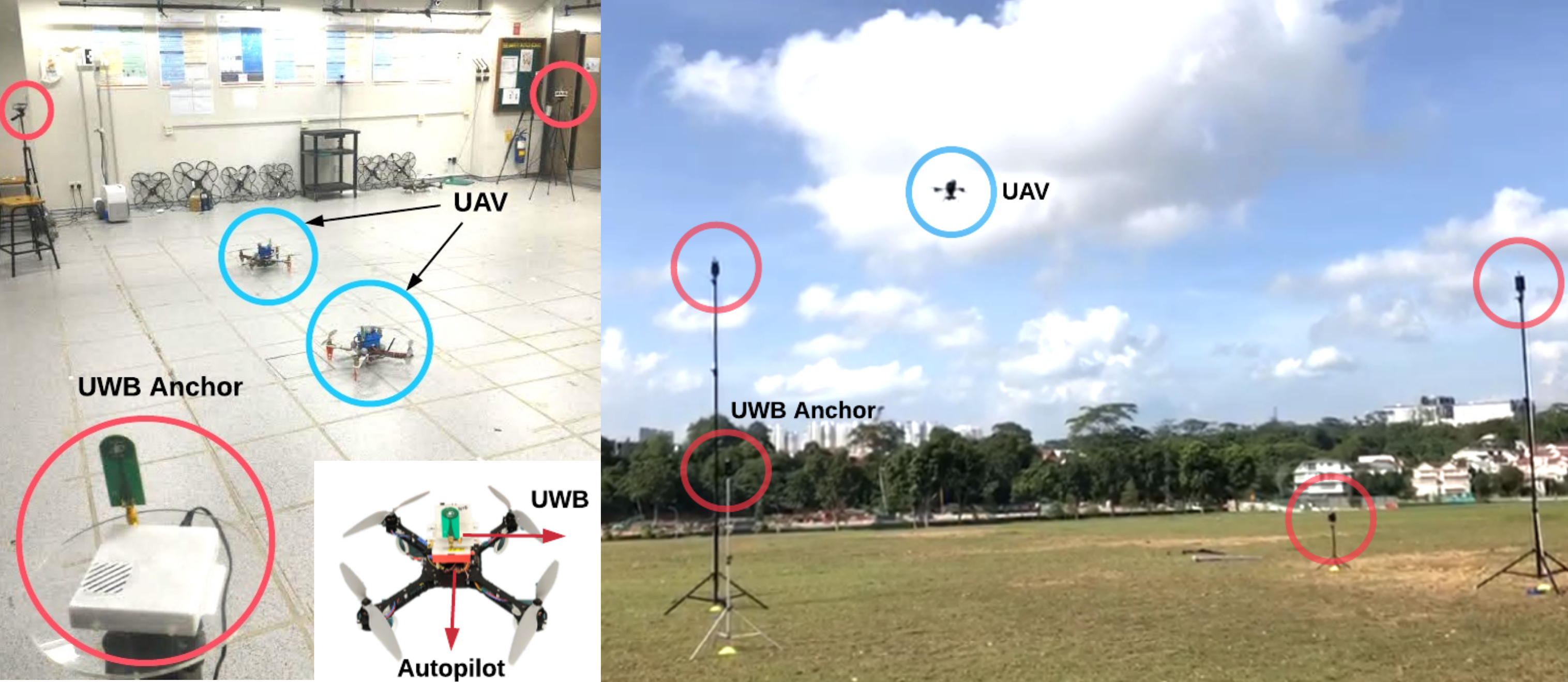}
	\caption{The experiments are carried out in both indoor and outdoor environments. The localization system in the experiments consists of only four fixed anchors. The UAV can send requests to four fixed UWB anchors sequentially to get the range measurements.}
	\label{fig6}
\end{figure*}

\section{Experimental Results}\label{a7}

In this section, we use a UAV to test the performance of the proposed range-only based and range-orientation based localization algorithms. The hardware, experiment setup, evaluation, parameter selection, comparison of localization accuracy, the effects of different number of iterations and window size in the optimization on the localization accuracy, analysis of robust and smoothness of the proposed algorithm are presented, respectively.

\subsection{Hardware}

Quadrotor UAV consists of four rotors which are configured in a cross shape as shown in Fig. \ref{fig6}. The UAV 
is equipped with UWB module, IMU sensor and Pixhawk Autopilot.   
The UWB platform for the experiment is from Time Domain with operating band from 3.1 GHz to
5.3 GHz shown in Fig. \ref{fig6}. Within the the range of $100 \mathrm{m}$, it is able to provide
precise measurement at an update rate of around $40 \mathrm{Hz}$. Its
dimension ($7.6 \mathrm{cm} \times 8.0 \mathrm{cm} \times 1.6 \mathrm{cm}$) and weight ($58 \mathrm{g}$) are
suitable for micro unmanned aerial vehicles.  Our UWB ranging algorithm uses two-way TOA measurement to calculate the range shown in \eqref{25}, which is able to provide relatively steady range measurements and has the ranging area of $100 \mathrm{m}$ with ranging error within $\eta=0.2 \mathrm{m}$. The IMU sensor is from myAHRS+ (altitude heading reference system), which is a low cost high performance attitude and heading reference system (AHRS) containing a 3-axis 16-bit gyroscope, a 3-axis 16-bit accelerometer, and a 3-axis 13-bit magnetometer.

\subsection{Experiment Setup and Evaluation}
The indoor experiments were carried out in an area of $6 \mathrm{m} \times 6 \mathrm{m} \times 3 \mathrm{m}$, and outdoor experiments were carried out within an area of $6 \mathrm{m} \times 8 \mathrm{m} \times 5 \mathrm{m}$.
The four non-coplanar fixed anchors are used in our experiments. UAV will send requests to four fixed UWB anchors sequentially to get the range measurements. The ground truth is provided by a VICON system which has a localization accuracy of mm-level. In order to analyze the localization accuracy, {the mean error $\rm E_{T}$ and root mean square error $\rm E_{RMSE}$ of translation are given by}
\begin{equation}
   \rm E_{T} = \frac{1}{k}\sum_{i=1}^{k}||\mathbf {\hat t}_i - \mathbf t_{i}||_2,
\end{equation}
\begin{equation}\label{1123}
   \rm E_{RMSE} =\sqrt{ \frac{1}{k}\sum_{i=1}^{k}||\mathbf {\hat t}_i - \mathbf t_{i}||_2^2},
\end{equation}
where $\mathbf {\hat t}_i$ and $\mathbf t_{i}$ are the estimate and ground truth, respectively. 

The mean error of rotation $\rm E_{O}$ is given by
\begin{equation}\label{rotation}
    \rm E_{O} = \frac{1}{k}\sum_{i=1}^{k}||\mathbf{\hat R}_i\mathbf{R}_{i}^{-1}- \mathbf{I}||.
\end{equation}
where $\mathbf{\hat R}_{i}$ is the estimated rotation, and $\mathbf{ R}_{i}$ is the rotation from the orientation sensor.

\begin{table*}[t]
  \begin{center}
  \caption{ Comparison of existing range-based localization methods in 2-D plane}
        \begin{tabular}{cccccccccc}
        \toprule
         \textbf{2-D plane} & \textbf{Proposed range-only based Method}    & \textbf{MHE} & \textbf{UKF}  &  \textbf{EKF}  & \textbf{NR} & \textbf{Particle Filter} & \textbf{GRNN} & \textbf{BPNN} & \textbf{KNN} \\
         
        \midrule
        Translation Error (m) &
        {\textbf{0.031}} & 0.078  & 0.091 & 0.102 & 0.154 &0.115 & 0.114 & 0.128 &0.166
        \\
        \bottomrule
        \end{tabular}
        \label{table1}
  \end{center}
  \ \ \ \ \ \ \ \ \ \ \ \ \ \ \ \ \ \ \ \
  Best result is highlighted in black boldface. The \textbf{RMSE} of the proposed method in 2-D plane is \textbf{0.033}m.

  \ \ \ \ \ \ \ \ \ \ \ \ \ \ \ \ \ \ \ \
  The range-only based localization and range-orientation based localization achieved the same localization accuracy in translation.
    
  \ \ \ \ \ \ \ \ \ \ \ \ \ \ \ \ \ \ \ \  \textbf{GRNN}: Generalized regression neural network,  
  \textbf{BPNN}: Back propagation neural network, 
  \textbf{KNN}: K-nearest neighbor method.
\end{table*}

\begin{table*}[t]
  \begin{center}
  \caption{Comparison of existing range-based localization methods in 3-D space}
        \begin{tabular}{cccccccc}
        \toprule
     \textbf{3-D space}  &\textbf{Proposed range-only based Method}  & \textbf{UKF} &  \textbf{EKF} & \textbf{NR} & \textbf{RVFL+FS} & \textbf{MDS+PSO} & \textbf{MHE}\\
        \midrule
         Mean Error of $x$ (m)  & {\textbf{0.023}}  &0.062 & 0.114 &  0.102 & - & - & -\\
        Mean Error of $y$ (m) &{\textbf{0.022}}   &0.066 & 0.123  &  0.116 & - & - & -\\
        Mean Error of $z$ (m) &{\textbf{0.077}}  &\textcolor{red}{\textbf{0.232}} & \textcolor{red}{\textbf{0.353}} &  \textcolor{red}{\textbf{0.346}} & - & - & -\\
        Translation Error (m) 
       &{\textbf{0.083}}  & \textcolor{red}{\textbf{0.249}} & \textcolor{red}{\textbf{0.391}} & \textcolor{red}{\textbf{0.379}} & \textcolor{red}{\textbf{0.340}} &\textcolor{red}{\textbf{0.698}}  &\textcolor{red}{\textbf{\textgreater 1}}
        \\
        \bottomrule
        \end{tabular}
        \label{table2}
  \end{center}
  \ \ \ \ \ \ \ \ \ \ \ \ \ \ \ \ \ \ \ \
  Best results are highlighted in black boldface. The \textbf{RMSE} of the proposed method in 3-D space is \textbf{0.081}m. 
  
  \ \ \ \ \ \ \ \ \ \ \ \ \ \ \ \ \ \ \ \
  Weak results in the existing range-based localization methods are highlighted in red boldface.
  
  \ \ \ \ \ \ \ \ \ \ \ \ \ \ \ \ \ \ \ \
  The range-only based localization and range-orientation based localization achieved the same localization accuracy in translation.
  
  \ \ \ \ \ \ \ \ \ \ \ \ \ \ \ \ \ \ \ \  \textbf{PSO}: Particle swarm optimization, 
  \textbf{RVFL}:  Random vector functional link network,
  \textbf{FS}: Feature Selection.
\end{table*}

Since IMU and UWB measurements may not be obtained at the same time instant, 
a time synchronizer filter is used to synchronize incoming IMU and UWB measurements. At each time instant when a UWB measurement is received, we get a ranging measurement for constructing range constrained equation \eqref{26a}, then IMU measurement whose time instant is closest to the incoming UWB time instant is chosen for constructing trajectory smoothness constrained equation \eqref{31}. Since the IMU has a much higher data rate, the remaining IMU measurements are not used.

\subsection{Parameter Selection}\label{para}

The system parameters are shown as follows.

% To implement the proposed method, the system parameters need to be tuned.
% \subsubsection{Known Parameters}

\begin{enumerate}
\item Anchor translations $\mathbf{t}_a^k$ in \eqref{26} in indoor environments are $(3,3,1.95)$, $(3,-3,0.53)$, $(-3,3,0.54)$ and $(-3,-3,1.98)$, respectively, and $(0,0,0.79)$, $(6,0,5)$, $(6,8,1.52)$ and $(0,8,5.52)$ in outdoor environments.
The unit is metre$(\mathrm{m})$;
\item Range measurement $d_k$ in \eqref{26} is from UWB sensor;
\item The upper bound of range measurement noise is $\eta=0.2 \mathrm{m}$;
\item Frequency of UWB sensor in \eqref{outlier} is $f =32.46\mathrm{Hz}$;
\item Frequency of IMU sensor is
$100.3\mathrm{Hz}$;

\item  The variance of range measurement noise in \eqref{fa} is set as $\sigma_r^2= \frac{\eta^2}{9}$;
\item The selections of the weights $w_s^{k}$ and  $\mathbf{w}^{k}_{t}$ are presented in \eqref{d1} and \eqref{diagnal};
\item The initial guess of robot rotation $\mathbf{R}_k$ in $\mathbf{P}_k$ is set as $\mathbf{\breve R}_k$, where $\mathbf{\breve R}_k$ is the IMU measurement. The initial guess of robot translation is set randomly.
\end{enumerate}

\begin{figure}[t]
\centering
\includegraphics[width=0.95\linewidth]{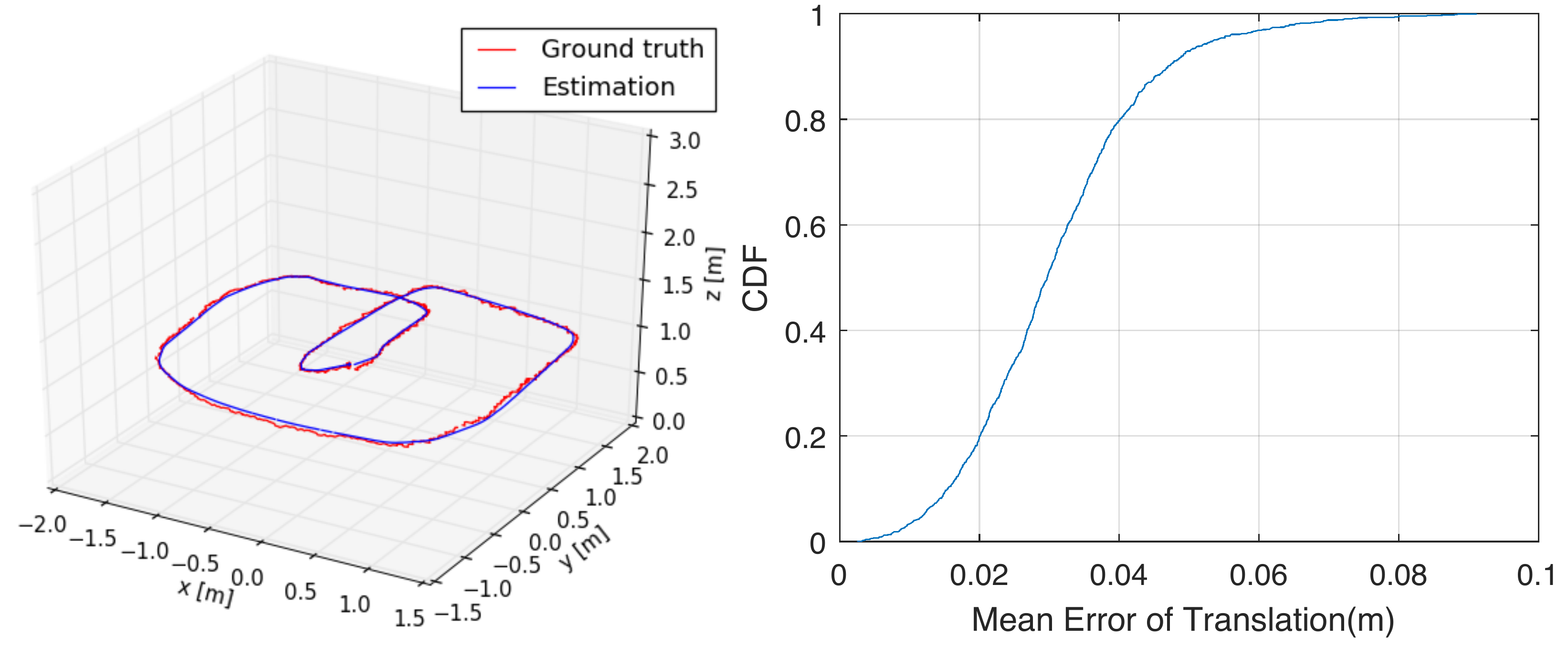}
\caption{UAV moved along a circle or rectangle in a 2-D plane.  In the left figure,
the red line is Ground truth from VICON system, and the blue line is the range-only based estimation. The cumulative distribution of mean error of translation is presented in the right figure.}
\label{fig8}
%\vspace*{-1mm}
\end{figure}

\subsection{Comparison with Existing Range-based Localization Methods}

The UAV moved along a circle or a rectangle in our experiments. The range-only based localization \eqref{sliding} and range-orientation based localization \eqref{c2} obtained similar localization accuracy in translation, which verified that fusing the orientation information to the proposed framework will not influence the estimation of translation.  For the range-orientation based localization \eqref{c2},
the mean error of rotation in \eqref{rotation} is only $0.0023$.

\begin{figure}[t]
\centering
\includegraphics[width=0.95\linewidth]{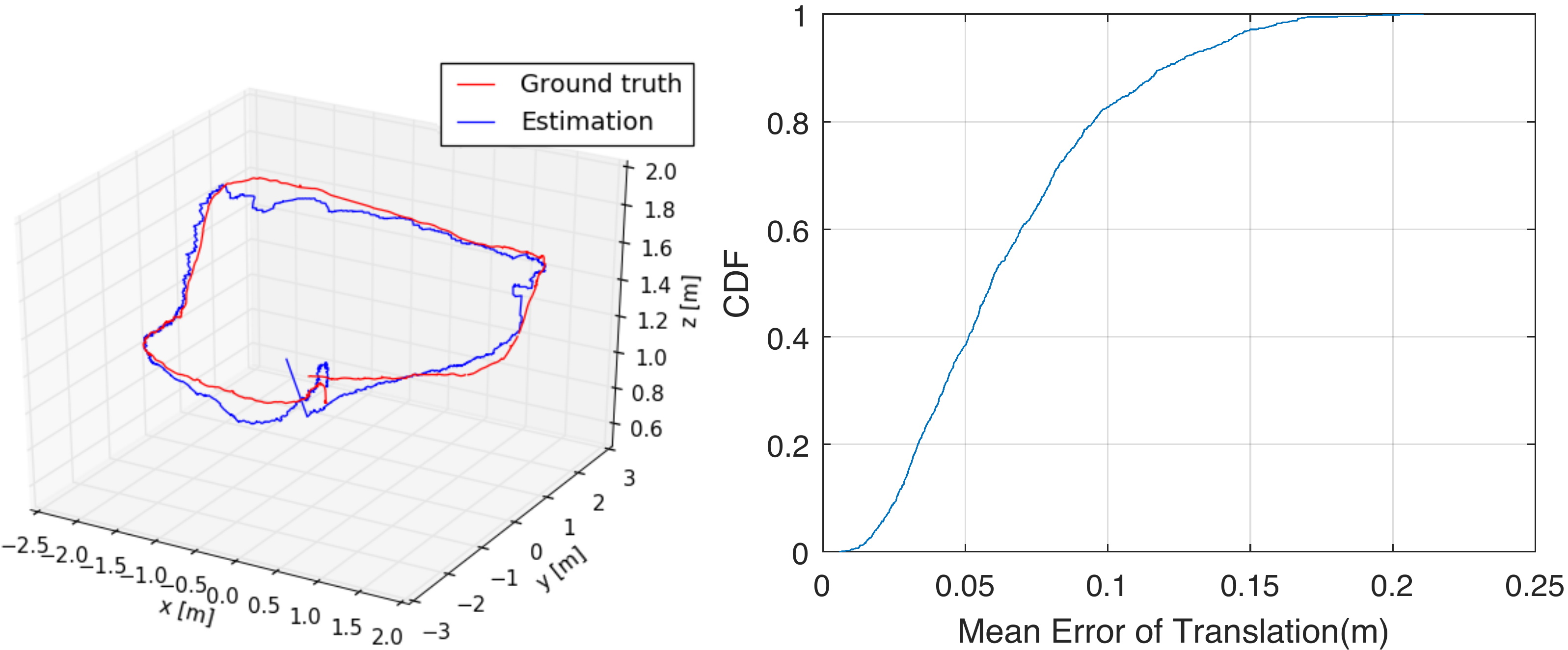}
\centering
\caption{ UAV moved in a 3-D space. In the left figure, the red line is ground truth from VICON system, and the blue line is the range-only based estimation. The cumulative distribution of mean error of translation is presented in the right figure. }
\label{fig9}
%\vspace*{-1mm}
\end{figure}

For the comparison with existing methods (UKF, EKF, nonlinear regression (NR), MDS, RVFL)\cite{Guo:2016ff,Anonymous:tWWRLPVI, Cui:2016bg,cui2018received}, we implemented their algorithms in the same environment of our lab with the best choices of parameters to ensure the fairness of the comparison. The results of moving horizon estimation and particle filter method are from \cite{Prorok:2014jc, girrbach2017optimization, pillonetto2010unconstrained}.
We adopted the experimental results of the machine learning methods (GRNN, BPNN and KNN) in the latest work \cite{chen2018grof}.

We conduct more than 50 experiments in 2-D plane, in which one of the examples is shown in Fig. \ref{fig8}, where the cumulative distribution (CDF) of mean error of translation is presented, and the ground truth and estimated trajectory are shown in red and blue lines, respectively. 
The mean error and root mean square error of translation of the proposed method are 3.1 and 3.3 centimeters, respectively. The CDF of mean error of  translation in 2-D plane shows that about 75 percent of translation errors are within 4 centimeters.
In comparison with other range-based localization methods in 2-D plane, the localization accuracy of the proposed algorithm has improved more than 3 times as shown in Table  \ref{table1}.

\begin{figure}[t]
\centering
\includegraphics[width=1\linewidth]{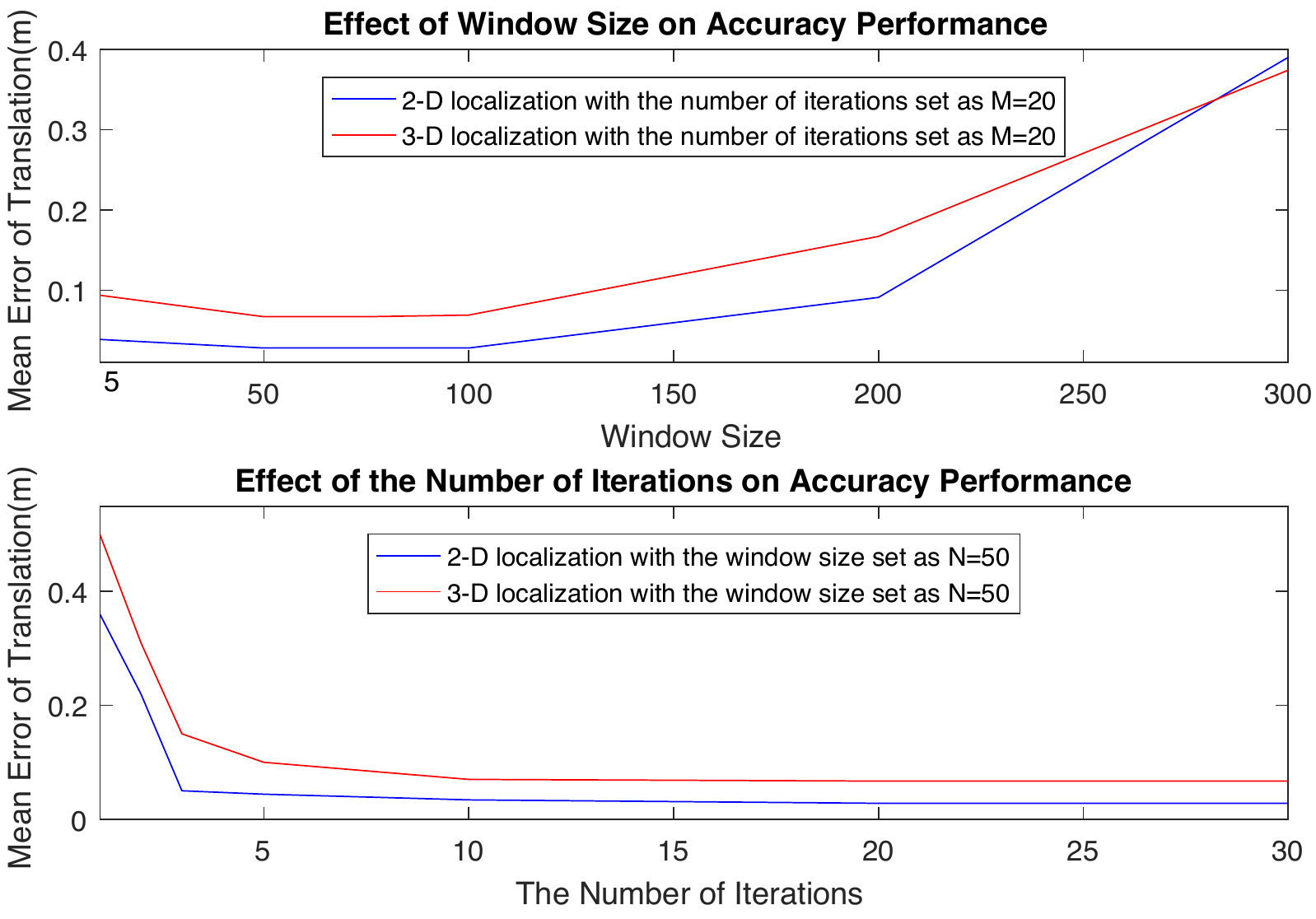}
\caption{Effects of different iteration number and window size on mean translation error. }
\label{figap}
\end{figure}

Similarly, Fig. \ref{fig9}. shows one of the experiments conducted in 3-D space. The mean error and root mean square error of translation of the proposed method are 8.3 and 8.1 centimeters, respectively.
The CDF of mean error of translation in 3-D space shows that about 75 percent of translation errors are within 8 centimeters. The comparison with other range-based localization methods in 3-D space is shown in Table \ref{table2}. The localization accuracy in $x$ or $y$ direction has improved more than 3 times. The mean errors of the existing methods are about 6.2 to 12.3 centimeters in the $x$ and $y$ directions. But the mean errors of our proposed algorithm are about 2.3 and 2.2 centimeters in the $x$ and $y$ directions, which are much more accurate than the existing methods. 

Compared with existing experimental results, 
it is worth noting that the localization accuracy in the altitude improved greatly  without the need of
adding altitude sensors to measure the altitude or placing the anchors on the ceiling.  The mean error in the $z$ direction is about 7.7 centimeters, which is accurate enough to fly a UAV in 3-D space as demonstrated in our UAV flight experiment.

\begin{remark}
As the ground truth of the mobile robot is obtained by a VICON system which is limited to an area of $ 6 \mathrm{m} \times 6 \mathrm{m} \times 3 \mathrm{m}$, we can only compare the proposed method with others within the area of $ 6 \mathrm{m} \times 6 \mathrm{m} \times 3 \mathrm{m}$. It is worth noting that the proposed general framework was also successfully applied in the UWB-aided visual SLAM with an improved performance \cite{wang2017ultra}. 
\end{remark}

\subsection{The effects of the Number of Iterations and Window Size }\label{parameter}

There are two important tuning parameters needed to be analyzed. One is the number of iterations $M$, and the other is the window size $N$.  The effects of different number of iterations and window size in the optimization on localization accuracy are analyzed on 2.2GHz intel core i7 processor.
The mean error of translation of the proposed method with different number of iterations and window size  in both 2-D plane and 3-D space are shown in Fig. \ref{figap}. When $N>5 $ and $M \cdot N<2000$, good localization accuracy can be obtained.

It is intuitive that if the update rate of proposed algorithm (reciprocal of running time) is lower than the frequency of UWB sensor, the proposed algorithm will lose some range measurements, resulting in a larger translation estimation error.
To guarantee a good performance of the proposed method on different power processors and scenarios, the following two conditions should be satisfied: $(a)$
The mobile robot can receive the range measurements from at least four non-coplanar anchors; 
$(b)$ The number of iterations and window size should be adjusted to ensure that the update rate of the proposed algorithm is larger than the frequency of the range sensor.  

\begin{figure}[t]
\centering
\includegraphics[width=1\linewidth]{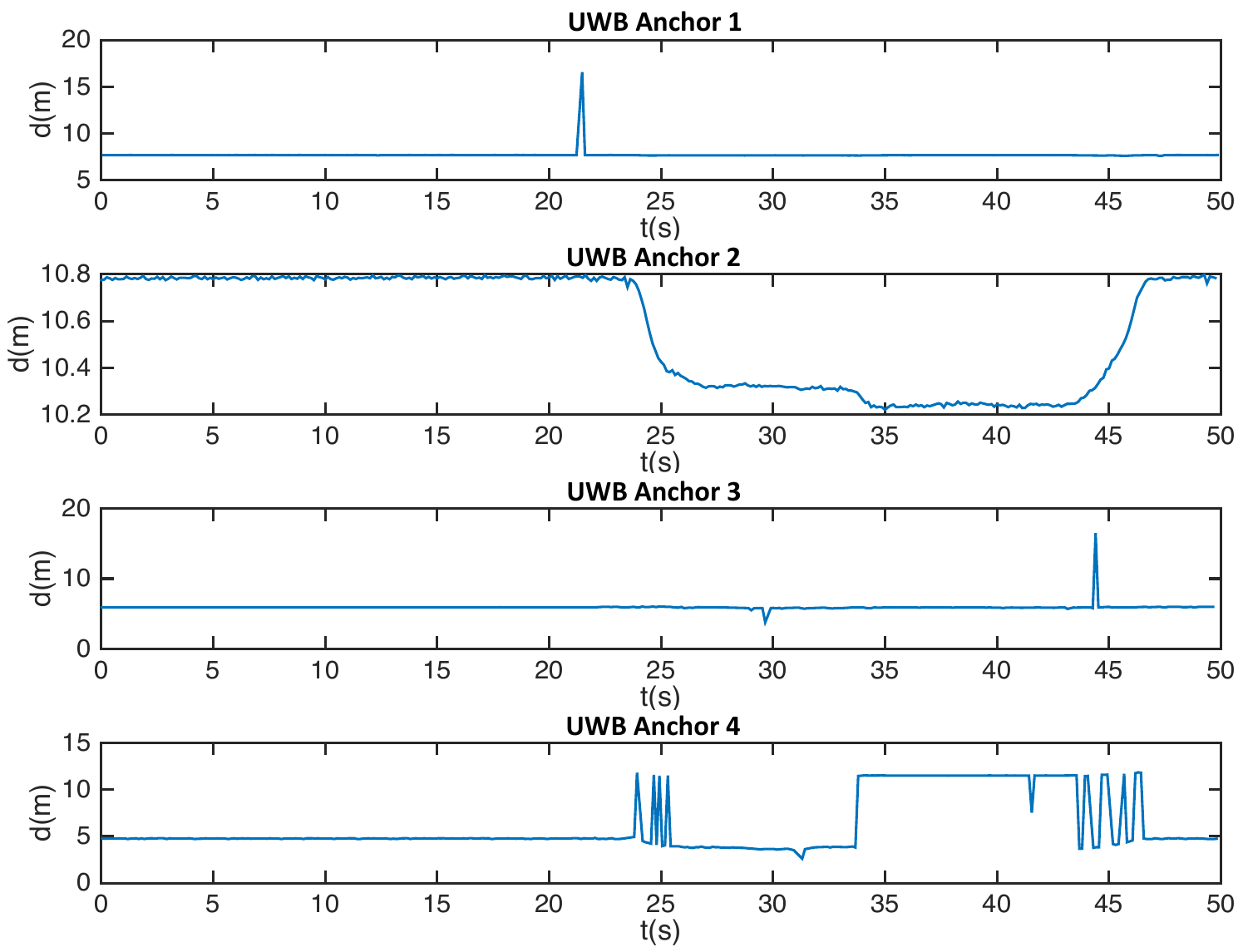}
\caption{Existence of outliers in the measurements.}
\label{figa5}
\end{figure}

\subsection{Robust:Outlier Rejection}

An outlier rejection test with four fixed UWB anchors and a static robot is presented.
The range measurements from the four anchors should be almost constant if there is no NLOS measurement. In this test, UWB anchors have outliers, which are rejected based on the algorithm \eqref{outlier}. Our method can still provide stable translation estimation even if the outliers exist for seconds. For example, UWB Anchor 4 has outliers from $23.7$ second to $25.4$ second and $33.8$ second to $47.3$ second, respectively 
shown in Fig. \ref{figa5}.
The translations of four fixed UWB anchors are $(0, 0, 0.77)$, $(6.13, 0, 5)$, $(6.01, 8.07, 0.79)$ and $(0.11, 8.02, 5.02)$, respectively. The test result showed that we can get accurate translation estimation $(0.14, 7.63, 0.29)$, which is close to the ground truth $(0.12, 7.61, 0.23)$ during the whole $50$ seconds, which verifies the robust of the proposed method.

\subsection{Smoothness:
Comparison with Barometer Sensor}

We have verified that the proposed method obtains high localization accuracy in the altitude. In addition, we find that the proposed method performs better on smoothness than the barometer. It can be seen from Fig. \ref{figa1} that barometer reading is noisy, but our proposed method is smooth due to the design of trajectory smoothness constrained equation between adjacent translations.

\section{Conclusion}\label{a9}

In this paper, a general graph optimization based localization framework was proposed which allows fusion of various sensor measurements for localization. Special emphasis was then given to the range-based localization, which removes the dependence on kinematic model and requirement of receiving multiple range measurements concurrently, and can be implemented real-time in some low power systems. Compared with existing range-based localization methods, better localization accuracy in both 2-D plane and 3-D space were obtained, especially in the altitude direction.

\ifCLASSOPTIONcaptionsoff
  \newpage
\fi

\begin{figure}[t]
 \centering
\includegraphics[width=1\linewidth]{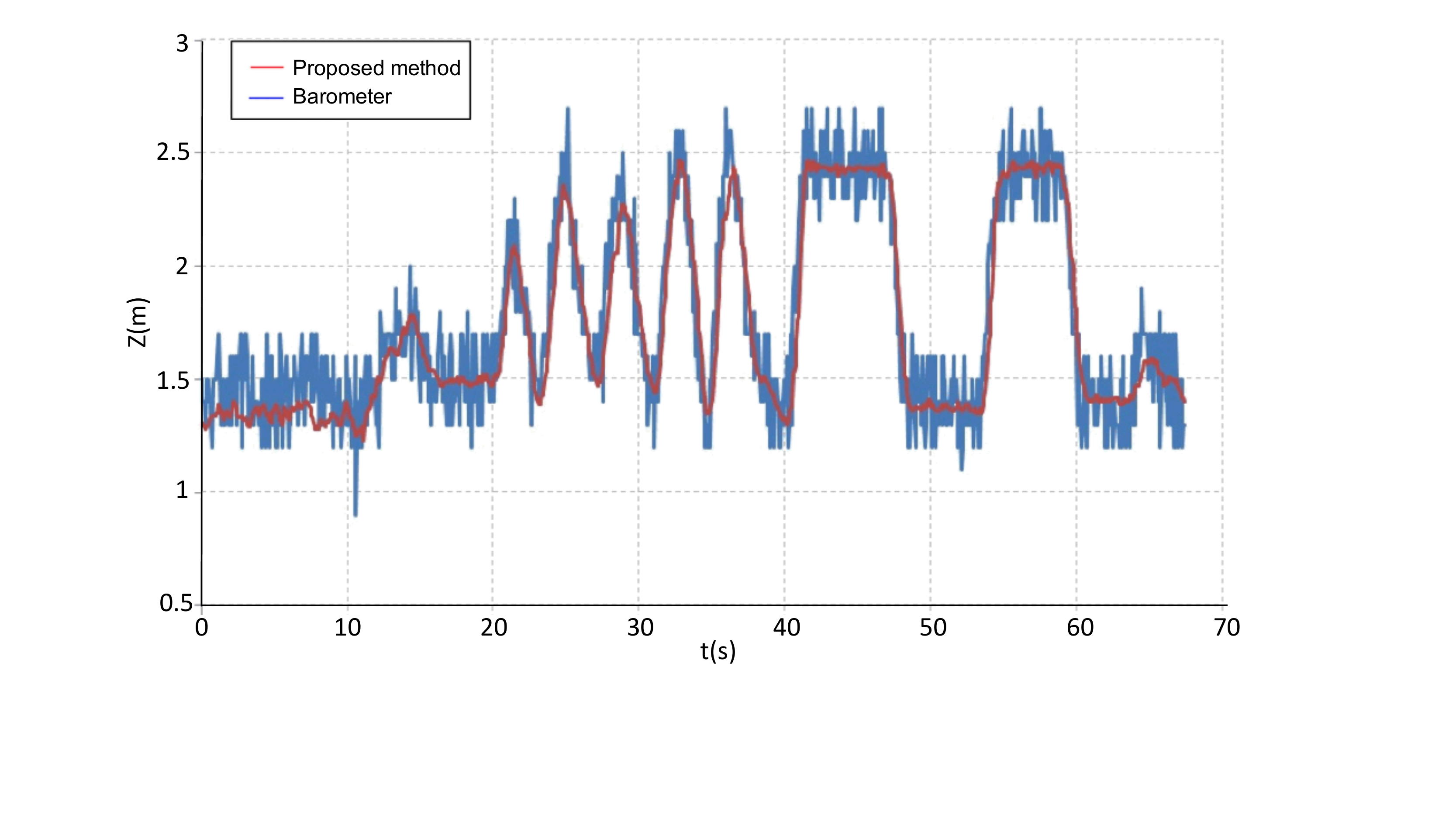}
\caption{The comparison between barometer sensor (blue line) and our method (red line) in estimating altitude.}
\label{figa1}
\end{figure}

\bibliographystyle{IEEEtran}
\bibliography{papers}

\vfill

% that's all folks
\end{document}